\newtheorem{proposition}{Proposition}
\begin{document}
\onehalfspacing
\begin{center}

\textbf{\sc \Large  Clustering categorical data via ensembling \\dissimilarity matrices}

\vspace{0.3cm}

{Saeid Amiri\footnote{Corresponding author: saeid.amiri1@gmail.com},  ~~Bertrand Clarke~ and~~Jennifer Clarke  }

\vspace{0.3cm}

{\it  Department of Statistics, University of Nebraska-Lincoln, Lincoln, Nebraska, USA}

\end{center}

\begin{abstract}
We present a technique for clustering categorical data by generating many dissimilarity matrices
and averaging over them. We begin by demonstrating our technique on low dimensional categorical data 
and comparing it to several other techniques that have been proposed.  Then we give
conditions under which our method should yield good results in general.  Our method extends to
high dimensional categorical data of equal lengths by ensembling over many choices of explanatory variables.
In this context we compare our method with two other methods.   Finally, we extend 
our method to high dimensional
categorical data vectors of unequal length by using alignment techniques to equalize
the lengths.  We give examples to show that our method continues to provide good
results, in particular, better in the context of genome sequences than clusterings suggested by phylogenetic trees.
\end{abstract}

\noindent {\bf Keywords}:  Categorical data; Ensembling methods; High dimension; Monte Carlo investigation. 

\section{Introduction}
Clustering is a widely used unsupervised technique for identifying natural classes
within a set of data.   The idea is to group unlabeled data into subsets 
so the within-group homogeneity is relatively high and the between-group 
heterogeneity is also relatively high.   The implication is that the groups should
reflect the underlying structure of the data generator (DG).  Clustering of 
continuous data has been extensively studied over many decades leading to 
several conceptually disjoint categories of techniques.  However,
the same cannot be said for the clustering of categorical data which,
by contrast, has not been developed as extensively.

We recall that categorical data are discrete valued measurements.  Here, we will assume 
there are finitely many discrete values and that there is
no meaningful ordering on the values or distance between them  (i.e., nominal).  
For instance, if the goal is to cluster genomes, the data consist of
strings of four nucleotides (A, T, C, G).  Typically these strings are high dimensional
and have variable length.
There are many other similar examples such as clustering a population
based on the presence/absence of biomarkers in specific locations.

We first propose a technique for low dimensional equi-length categorical data vectors 
as a way to begin addressing the problem of discrete clustering in general. 
Second, we extend this basic  technique to data consisting of high dimensional but equal length vectors.  
Then, we adapt our technique
to permit unequal length vectors.   Our approach is quite different from others 
and there are so  many approaches that we defer 
discussion of a selection of them to Sec. \ref{review}.

Heuristically, our basic method in low dimensions is as follows.  Start with $n$ vectors of dimension $J$,
say $x_i = (x_{i1}, \ldots , x_{iJ})$ for $i=1, \ldots , n$, 
where,  for $i=1, \ldots , n$ and $j = 1, \ldots , J$, $x_{ij}$  assumes
values in a finite set $A_j = \{ 1, \ldots , a_j\}$ that can be identified with 
the first $a_j = \#(A_j)$ natural numbers.  
For each $j=1, \ldots , J$ we form a $n \times a_j$ membership matrix $M_j$   
by treating each of the $J$ variables separately.   Then the combined membership matrix is 
$M = (M_1, \ldots , M_J)$ of dimension $n \times \sum_{j} a_j$.  Using Hamming
distance on the rows of $M$ gives a dissimilarity measure on the $x_i$'s, say, $d$.  
Now, we can use any hierarchical method to generate a clustering of the $x_i$'s but we 
argue that adding a second stage by ensembling will improve performance.

We add a second stage by first choosing $K_b \sim {\sf DUnif}[2, \sqrt n]$ for $b = 1, \ldots , B$ and assume 
the $K_b$'s are distinct.   For each $K_b$, we use a hierarchical clustering technique 
based on $d$ to produce a clustering with $K_b$ clusters. Each of the $B$ clusterings gives a combined
membership matrix $M_b = (M_b1, \ldots , M_bJ)$ of the form $M$ above. Again, we can 
use the Hamming distance on the rows as a dissimilarity, say $D$. As with $d$, any hierarchical 
clustering method can be used with the ensemble level dissimilarity $D$.

In our work, we found that using $d$ and $D$ with the same linkage criterion
typically gave better results than using $d$ and $D$ with different linkage criteria.
Thus, we have six categorical clustering techniques:  Three use the basic method based on $d$ (i.e., first stage only) with single linkage
(SL), complete linkage (CL), and average linkage (AL), while the other three use the ensemble method based on $d$ and $D$ (i.e.,first and second stages) with SL, CL, or AL.  As a generality, in our comparisons we found that the ensembled clusterings under AL or CL typically gave the best results.

We describe two paradigms in which the sort of ensembling described above is likely to be
beneficial.  One paradigm uses squared error loss and the familiar Jensen's inequality
first used in bagging, an ensembling context for classification, see \cite{Breiman::1996}.  The other paradigm uses a dissimilarity that parallels 
zero-one loss.  Both descriptions rely on the concept of a true clustering
${\cal{C}}_T= (C_{T1}, \ldots, C_{T K_T})$ of size $K_T$ 
based on the population defined by $P$, the probability distribution
generating the data.  
For our results, we assume ${\cal{C}}_T$ is uniquely defined in a formal sense. 


We extend this basic method to high dimensional data vectors of equal length
by partitioning the vectors into multiple subvectors of a uniform but smaller
length, applying our basic method to each of them, and then combining the results 
with another layer of ensembling.

Our second extension, to variable length categorical vectors, is more complicated but can be used to address the clustering of such vectors in genome sciences.  It involves the concept of alignment.   There are various forms of alignment (local, semi-global, 
global, etc.) but the basic idea is to force vectors of categorical data of differing lengths to have the
same length by adding an extra symbol e.g., a $\varnothing$, in strategic places.  
Then the resulting equal length vectors can be clustered as in our first extension.

In the next section, we review the main techniques for clustering categorical data
and indicate where they differ from the methods we have proposed.  
 In Sec. \ref{lowdim}
we formally present our technique described above and provide a series of
examples, both simulated and with real data, that verify our basic technique
works well.   We also present theoretical results that suggest our method
should work well in some generality.  In Sec. \ref{highdim} we present our first extension
and in Sec. \ref{highdimGen} we present our second extension.   In the final Sec. \ref{conclude}, 
we discuss several issues related to the use of our method.

\section{Clustering Techniques for Categorical Data}
\label{review}

In this section, we review six techniques for clustering categorical data,
roughly in the order in which they were first proposed.   


\subsection{$K$-modes}
\label{Kmodes}

$K$-modes, see \cite{Huang:1998}, is an extension of the familiar $K$-means 
procedure for continuous data to categorical data.  However, there 
are two essential differences.  First,
since the mean of a cluster does not make sense for categorical data, the modal 
value of a cluster is used instead; like the mean, the mode is taken
componentwise.  Second, in place of the Euclidean distance, $K$-modes uses
the Hamming distance, again componentwise.    The initial modes are usually 
chosen randomly from the observations.  As recognized in \cite{Huang:1998},
this leads to instability and frequent inaccuracy in that $K$-modes often
gives locally optimal clusterings that are not globally optimal.  

There have been many efforts to overcome the instability and inaccuracy
of $K$-modes clustering with categorical data.   Indeed,
\cite{Huang:1998} suggested choosing the initial modes to be as far from
each other as possible.   Even if this were formalized, it is not clear how it
would ensure the resulting $K$-modes clustering would be accurate or stable.
A different approach was taken in \cite{Wu:etal:2007}.  These authors used
the density of a point $x_i$ defined to be $p(x_i) = 1/\sum_{i\prime=1}^n d(x_i, x_{i\prime})$ so
that a point with high density should have many points relatively close to it.
\cite{Cao:etal:2009} 
used a similar density at points.  
Further attempts to find good initial values are in \cite{Bai:etal:2012}
who use the algorithm in \cite{Cao:etal:2009} with a different distance function.
Also, \cite{Khan:Ahmad:2013} proposed a method for selecting the most relevant 
attributes and clustering on them individually.  Then, they take representatives
of the clusterings as the initial values for a
$K$-modes clustering.

Our method is based on ensembling so it is automatically stable.

\subsection{DBSCAN}
\label{DBSCAN}

There have been several papers using the density-based algorithm, DBSCAN (Density-Based Spatial Clustering of Applications with Noise).  Originally proposed for
continuous data by \cite{Ester:etal:1996}, it extends to categorical
variables because Hamming distance can be used to define a dissimilarity matrix.  
DBSCAN defines a cluster to be a 
maximum set of density-connected points; two points are density connected if and only
if the distance between them is less than a pre-assigned parameter.  This means
that every point in a cluster must have a minimum number of points within a given radius.
There are other approaches that are similar to the DBSCAN such as
 Cactus, see \cite{Ganti:etal:1999}  and Clicks, see \cite{Zaki:etal:2007}.  
These and other methods are used in \cite{Andreopoulos:Wang:2007} on { \sf ZOO} and 
{\sf SOYBEAN} data where it is seen they do not outperform $K$-modes.  Hence,
we do not use Cactus, Clicks or their variants here.  Note that to use these methods
one must choose a distance parameter that influences the size of the resulting clusters.

Our method requires no auxilliary parameter and combines clusterings over randomly selected
dimensions avoiding the question of `density' in a discrete context.

\subsection{ROCK}
\label{ROCK}

\cite{Guha:etal:1999} presented a robust agglomerative hierarchical-clustering algorithm that can 
be applied to categorical data.  It is referred to as ROCK (RObust Clustering using linKs). 
ROCK employs `links' in the sense that it counts the number of point-to-point
hops one must make to form a path from one data point to another.  Note that this
relies on the number of points there are between two selected points, but not directly
on the distance between them.  In essence, ROCK iteratively merges clusters to achieve
high within-cluster similarity.  However, this de facto requires a concept of distance
between points by way of hops -- two hopes being twice as long as one hop.  Our
method does not rely on distance -- it only counts the number of locations at which
two strings differ.
 
\subsection{Hamming Distance (HD)}
\label{HD}

\cite{Zhang:etal:2006} use the Hamming distance (HD) from each observation to a reference position $r$.
Then, they form the histogram generated by the values $d(x_i, r)$ and set up a hypothesis test.
Let ${\cal{H}}_0$ be that there are no clusters in the data and take ${\cal{H}}_1$ to be
the negation of this statement.  Under ${\cal{H}}_0$, we expect the histogram to be
approximately normal, at least for well chosen $r$.
Under the null hypothesis, lack of clustering, \cite{Zhang:etal:2006}
estimates the frequency of the uniform `HD'.   This is compared to the
HD vector with respect to $r$ by way of a Chi-squared statistic.  If the
Chi-square statistic is too large, this is evidence against the null.  So,
the data that make the Chi-square large are removed and the process is repeated.   
This is an iterative method that relies on testing and the choice of $r$.  It is therefore
likely not stable unlike an ensemble method.  

\subsection{Model based clustering (MBC)} 
\label{MBC}

Model based clustering has attracted a lot of attention, see \cite{Fraley:Raftery:2002}.
The main idea is that an overall mixture model for the observations can be identified and the subsets of data can be assigned to components in the mixture.    So, the overall model is of the form
$$
f(x)=\sum_{k=1}^{K} \pi_k p_k(x|\theta_k),
$$
where $\sum_{k=1}^{K} \pi_k=1$, $\pi_k\leq 0$ and the $p_k$'s are the components. The  $\pi_k$  can be used as indicators of what fraction of data are from each component and the $p_k$'s can be used to assign data to components. The likelihood of a mixture model is 
$$
\mathfrak{L}(\theta_1,\ldots,\theta_K;\pi_1,\ldots,\pi_K)=\prod_{i=1}^n \sum_{k=1}^K \pi_k p_k(x|\theta_k),
$$
where the parameters can be estimated using the Expectation-Maximization algorithm. 
When the data are categorical, the $p_k$'s should be categorical and 
\cite{Celeux:Govaert:1991} proposed
$$
p_k(x\mid \theta_k)=\prod_{j=1}^A\prod_{h=1}^{m_j} (\theta_k^{jh})^{x^{jk}},
$$
where $\theta_k=(\theta_k^{jh},h=1,\ldots,m_j,j=1,\ldots,J)$, $\sum_{h=1}^{m_j} \theta_k^{jh}=1$  and $\theta_k^{jh}$ gives the probability of category $h$ of the variable $j$ in the cluster $k$. The proposed model is actually the product of $A$ conditionally independent categorical distributions.  Several versions of this method
are implemented in the R package { \sf Rmixmod}. 

Model based clustering should work well -- and does work well for continuous data.  For
categorical data it is not at all clear when a mixture model holds or even is a good approximation.  
In fact, the mixture model likely does not hold very often and we would only expect good 
performance from MBC when it does.  By contrast, ensembling
should be perform reliably well over a larger range of DG's.

\subsection{Ensemble approaches}
\label{ensembles}

Another approach is to regard the categorical data as the result of a clustering procedure.  
So, one can form a
matrix in which the rows represent the $n$ data points and each of the columns represent
the values of an attribute of the data point.  If the attributes are taken as
cluster labels then the clusterings, one for each attribute,
can be used to form a consensus clustering by defining a dissimilarity
between data points using Hamming distance.  This process is known
as ensembling and was first used on categorical data by \cite{He:etal:2005}
via the techniques  CSPA, HGPA and MCLA.   From these, \cite{He:etal:2005}
select the clustering with the greatest Average Normalized Mutual Information (ANMI) 
as the final result.  This seeks to merge information among different clusterings.
However the mutual information is  measure of dependence and finding clusterings
that are dependent is not the same as finding clusterings that are accurate.  
They apply their technique to several data sets but note
that it does not perform well on unbalanced data like {\sf ZOO}.
The actual process of ensembling is reviewed in \cite{Strehl:Ghosh:2002}.  We find that
ensembling over the dissimilarities is a better way to ensemble since it seems to give a more
accurate assessment of the discrepancy between points.

The idea of evidence accumulation clustering (EAC) is due to \cite{Fred:Jain:2005} and 
is an ensembling approach that 
initially was used for continuous data.  The central idea is to create many clusterings of different 
sizes (by $K$-means) that can be pooled via a `co-association matrix' that 
weights points in each clustering according to their membership in each cluster.   This 
matrix can then be easily modified to give a
dissimilarity so that single-linkage clustering can be applied, yielding a final clustering.  
The first use of EAC on categorical data seems to be \cite{Iam-On:etal:2012}.
They used the $K$-modes technique with random initializations for cluster centers
to generate $B$ base clusterings.  Then they ensembled these by various methods.
In Section \ref{Num}, we use this approach for categorical data but using $K$-modes 
instead $K$-means (EN-KM). The same idea can be used to implement EAC on the 
MBC (EN-MBC).   Again, our method ensembles over dissimilairties rather than 
clusters directly.  It seems that in catgorical clustering getting a good way to assess
discrepancy between data points provides better solutions than trying to merge
different clusterings directly.

\section{Basic Technique in Low Dimensions}
\label{lowdim}

Here we present our ensemble technique for clustering categorical data, provide some justification for
why it should perform well, and see how this is borne out in a few examples.

\subsection{Formal presentation}
\label{basictechnique}
 
To fix notation, we assume $n$ independent and identical (IID) outcomes $x_i,~ i=1,\ldots,n$ 
of a random variable $X$.  The $x_i$'s are assumed $J$-dimensional and written as
$(x_{i1}, \dots , x_{iJ})$ 
where each $x_{ij}$ is an element of a finite set $A_j$; the $A_j$'s do not have to be the same
but $\sup_j \#( A_j) <  \infty$ is required for our method.

We denote a clustering of size $K$ by 
${\cal{C}}_K = (C_{K1}, \ldots , C_{KK})$; often we assume that for each $K$ only one
clustering will be produced.   Without loss of generality, we assume each $A_j$ is identified with
$a_j$ natural numbers.


We write the $x_i$'s as the rows in an observation matrix and look at its columns:
\begin{eqnarray}
O = \left( \begin{array}{c}
x_1  \\
\vdots \\
x_n \end{array} \right) 
= \left( \begin{array} {ccc} 
c_1 & \ldots & c_J  \\ \end{array}  \right) .
\end{eqnarray}
That is, $c_j = (x_{1j}, \ldots , x_{nj})^T \in \varprod_{i=1}^n A_j$.  

Now, for each $c_j$ form the $a_j \times n$ membership matrix $M_j$
with elements
\begin{eqnarray}
m_{\ell, i} = \begin{cases}
1 &    x_{i j} = \ell\\
0  & \hbox{else} 
\end{cases}
\end{eqnarray}
where $\ell = 1, \ldots , a_j$ and $i = 1, \ldots , n $.  
The combined membership matrix is
$$
M = \left( \begin{array}{ccc}
M_1^T & \ldots & M_J^T \\ \end{array} \right),
$$
the result of concatenating $J$ matrices with dimensions $n \times a_1, \ldots , n \times a_J$.
Thus, $M$ is $n \times A$ where $A=\sum_{j=1}^J \vert a_j \vert$.  
We use the transpose of the $M_j$'s
so that each row will correspond to a subject.
Rewriting $M$ in terms of its rows i.e., in terms of subjects 1 through $n$, gives
\begin{eqnarray}
M = \left( \begin{array}{c}
v_1 \\
\vdots \\
v_n \end{array}  \right).
\end{eqnarray}
Now, consider the dissimilarity $D = ( d(v_\nu , v_\mu) )_{\nu, \mu = 1, \ldots , n}$ where
$$
d(v_\nu , v_\mu) = \sum_{\tau = 1}^J \delta(v_{\nu \tau}, v_{\mu \tau}) .
$$
Here, $\delta(a,b) = 0$ if $a=b$ and one otherwise.
Under this dissimilarity one can do hierarchical clustering under single linkage (HCSL),
average linkage (HCAL), and complete linkage (HCCL).

This gives three clustering techniques for categorical data.
However, we do not advocate them as we have described because, as will be seen below, we can get better 
performance by ensembling and treating sets of outlying points representing less than $\alpha100 \%$ of
the data separately (see \cite{Amiri:etal:2015} for the basic technique
to handle potential chaining problems and outliers under single 
linkage context).

Our ensembling procedure is as follows.  Choose one of HCSL, HCAL, or HCCL   
 and draw $K_b \sim {\sf DUnif}[ 2 , \sqrt{n}]$
for $b=1, \ldots , B$.  For each value of $b$ use the chosen method to
find a clustering of size $K_b$.  Then form the incidence matrix
\begin{eqnarray}
\mathcal{I}=\left[  \begin{array}{c|c|c|c}
w_{12} & w_{12} & \ldots & w_{1B} \\
\vdots & \vdots & \vdots & \vdots \\
w_{n1} & w_{n2} & \ldots & w_{nB} 
\end{array} \right]
\label{Incmatrix}
\end{eqnarray}
Note that each column corresponds to a clustering and $w_{ib}$ is the index of the cluster in the $b$-th
clustering to which $x_i$ belongs.    For any of the three linkages, we can find the
dissimilarity matrix
\begin{eqnarray}
T = ( d_B(x_i, x_j) )_{i,j = 1, \ldots , n}  ,
\label{endis1}
\end{eqnarray}
in which
\begin{eqnarray}
d_B(x_i, x_j) = \frac{1}{B}\sum_{b=1}^B \delta( w_{ib}, w_{jb}).
\label{endis2}
\end{eqnarray}
Note that we only compare the entries in a row within their respective columns.  Thus, we 
never compare, say $w_{11}$ with $w_{12}$ but only compare $w_{11}$ to
entries of the form $w_{i1}$ for $i \neq 1$.
Now, we can use the same linkages on $D$ as before (SL, AL, CL) with the
treatment of small sets of outliers as in \cite{Amiri:etal:2015}
to get a final clustering for any given $K$.  (The choice of $K$ can also be estimated by any of a number
of techniques; see \cite{Amiri:etal:2015} for a discussion and comparison of such techniques.)

\subsection{Why and when ensembling works} 
\label{whyensemble}

  Consider two IID observations $\mathbb{X}$ and $\mathbb{Y}$ from the population
  defined by $P$ assumed to have a well defined true clustering ${\cal{C}}_T$.  Write the cluster membership function as
  $$
  C_T(x) = k \Longleftrightarrow x \in C_{T k}.
  $$
  Now define the {\it similarity} of  $C_T(\mathbb{X})$ and $C_T(\mathbb{Y})$ to be the indicator function
\begin{eqnarray}
\chi\big(C_T(\mathbb{X}),C_T(\mathbb{Y})\big) = \begin{cases}
1 &    C_T(\mathbb{X})=C_T(\mathbb{Y}) \nonumber\\
0  & \hbox{else} 
\end{cases}
\end{eqnarray}
This gives the (random) dissimilarity $\delta = 1 - \chi\big(C_T(\mathbb{X}),C_T(\mathbb{Y})\big)$.  Clearly 
$\exists P_\delta \in (0,1)$ so that
$$
E(\delta)=P\big(C_T(\mathbb{X}) \neq C_T(\mathbb{Y})\big)=P_\delta.
$$
Empirically, we use data ${\cal{D}}$ to form a clustering denoted $\phi = \phi(\cdot \mid {\cal{D}})$ where
$\phi(x \mid {\cal{D}})$ is the index of the cluster in ${\cal{C}}({\cal{D}})$ containing $x$.  So, 
  $$
  \widehat \delta= 1 -\chi\big(\phi(\mathbb{X},{\cal{D}}),\phi(\mathbb{Y},{\cal{D}})\big),
  $$ 
  in which $\mathbb{X}$ and $\mathbb{Y}$ are independent of the IID data ${\cal{D}}$.
 Now, given $B$ independent replications of the clustering technique we obtain
  $$\widehat \delta_{k_b}=1- \chi\big(\phi_{k_b}(\mathbb{X},{\cal{D}}),\phi_{k_b}(\mathbb{Y},{\cal{D}})\big), \quad
  b=1,\ldots,B.
  $$
So, we can form the dissimilarity
 \begin{eqnarray}
 \bar{\delta}_{EN}=1-\frac{1}{B}\sum_{b=1}^{B}\chi\big(\phi_{k_b}(\mathbb{X},{\cal{D}}),\phi_{k_b}(\mathbb{Y},{\cal{D}})\big),
 \label{endis3}
\end{eqnarray}
 where $k_b \overset{iid}{\sim} DUnif(K_\ell,K_u)$, $K_\ell$ and $K_u$ are  the minimum and maximum acceptable sizes for a candidate clustering.   Note that the expression in \eqref{endis3} reduces to \eqref{endis2}
when $\mathbb{X}=x_i$ and $\mathbb{Y}=x_j$.  Thus, \eqref{endis2} is a specific instance of \eqref{endis3} so
when we do the hierarchical clustering in Sec. \ref{basictechnique} we are using a dissimilarity of
the same form as \eqref{endis3}.  Consequently, any statement involving $\bar{\delta}_{EN}$
will lead to a corresponding statement for the ensembling in our clustering technique.

The following proposition shows that the ensemble dissimilarity is closer to the true value of the dissimilarity
on average than any individual member of the ensemble 
when the clusterings in the ensemble are `mean unbiased'.
Formally, a clustering method is mean unbiased if and only if the expectation of the empirical
dissimilarity, conditional on the data and number of clusters, is $P_\delta$ and
so  independent of ${\cal{D}}$ and $k_b$.  That is,  a clustering method is mean unbiased if and only if
\begin{eqnarray}
E(\widehat \delta_{k_b}|{\cal{D}},k_b)=P\big(C_T(\mathbb{X})\neq C_T(\mathbb{Y})\big).
\label{meanunbiased2}
\end{eqnarray}
The intuition behind this definition is that the probability that two data points are
not in the same cluster is independent of $k_b$ for any $k_b$ chosen between $k_\ell$ and $k_u$
for fixed data ${\cal{D}}$.
Note that, in this expression, the expectation is taken over the collection
of possible clusterings holding ${\cal{D}}$ and $k_b$ fixed.  This requires that
some auxiliary random selections, such as the random
selection of initial cluster centers in $K$-means, is
built into the clustering technique.  When a clustering technique is
mean unbiased we get that
\begin{eqnarray}
P_\delta = E(\widehat \delta_b|{\cal{D}},k_b) = E(\bar{\delta}_{EN} |{\cal{D}},k_b), 
\label{endis4}
\end{eqnarray}
and hence the ensembling preserves the mean unbiasedness.
In this case, we can show that the ensembling behaves like taking a 
mean of IID variables in the
sense that the variance of the average of the $\widehat \delta_b$'s 
decreases as $1/B$.  The interpretation of this is that the ensemble dissimilarity
is a more accurate representation of the dissimilarity between any
two data points than the dissimilarity from any individual
clustering in the ensemble.
 
 \begin{proposition}
 Assume all the clusterings $\phi_{k_b}$ are mean unbiased.  Then, for
 each $D$ and set $k_1, \ldots , k_B$ there
 is a $B^*$ so that $B \geq B^*$ implies
 \begin{eqnarray}
 P( ~ \mid \bar{\delta}_{EN}-E\delta ~ \mid ~ >\epsilon ~ \mid ~ {\cal{D}},k_1,\ldots,k_B) \leq  
 \min_b P( ~ \mid \widehat \delta_{k_b}-E\delta ~ \mid  ~>\epsilon ~ \mid ~ {\cal{D}},k_b).
 \label{ensemblebd}
 \end{eqnarray}
 \end{proposition}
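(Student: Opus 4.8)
The plan is to recognize $\bar\delta_{EN}$ as a sample average of $B$ conditionally independent indicator variables that all have mean $E\delta$, to control its fluctuation around $E\delta$ by Chebyshev's inequality (obtaining an $O(1/B)$ bound), and then to compare this bound with the right-hand side of \eqref{ensemblebd}, which does not depend on $B$. Since the right side turns out to be either zero (in which case the left side is zero as well) or bounded below by a strictly positive constant, an eventual dominance follows.

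First I would pin down the conditioning. From \eqref{endis3}, $\bar\delta_{EN}=\frac1B\sum_{b=1}^B\widehat\delta_{k_b}$, and, as noted around \eqref{endis2}, the case relevant to the clustering procedure has $\mathbb{X}=x_i$ and $\mathbb{Y}=x_j$ belonging to $\mathcal{D}$, so conditioning on $\mathcal{D}$ (and, if one keeps $\mathbb{X},\mathbb{Y}$ random, also on $\mathbb{X},\mathbb{Y}$) together with $k_1,\ldots,k_B$ fixes the two points being compared and the target cluster numbers. Because the $B$ runs of the clustering method use independent auxiliary randomizations, the $\widehat\delta_{k_b}$ are then conditionally independent. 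Each $\widehat\delta_{k_b}$ is $\{0,1\}$-valued, and mean unbiasedness \eqref{meanunbiased2} gives $E(\widehat\delta_{k_b}\mid\mathcal{D},k_b)=P_\delta=E\delta$, so $\widehat\delta_{k_b}$ is, conditionally, Bernoulli$(P_\delta)$ with variance $P_\delta(1-P_\delta)\le 1/4$.

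Next I would bound the two sides separately. By conditional independence and \eqref{endis4},
\[
E(\bar\delta_{EN}\mid\mathcal{D},k_1,\ldots,k_B)=E\delta
\quad\text{and}\quad
\mathrm{Var}(\bar\delta_{EN}\mid\mathcal{D},k_1,\ldots,k_B)=\frac1{B^2}\sum_{b=1}^B P_\delta(1-P_\delta)=\frac{P_\delta(1-P_\delta)}{B},
\]
so Chebyshev's inequality yields
\[
P\big(\,|\bar\delta_{EN}-E\delta|>\epsilon \,\big|\, \mathcal{D},k_1,\ldots,k_B\,\big)\ \le\ \frac{P_\delta(1-P_\delta)}{B\epsilon^2}
\]
(Hoeffding's inequality would give the sharper $2e^{-2B\epsilon^2}$, but only the $1/B$ rate is needed). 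For the right side, since $\widehat\delta_{k_b}\in\{0,1\}$ with $P(\widehat\delta_{k_b}=1\mid\mathcal{D},k_b)=P_\delta$, the quantity $|\widehat\delta_{k_b}-E\delta|$ takes only the values $P_\delta$ and $1-P_\delta$; hence $P(|\widehat\delta_{k_b}-E\delta|>\epsilon\mid\mathcal{D},k_b)$ equals $1$ if $\epsilon<\min(P_\delta,1-P_\delta)$, equals $\min(P_\delta,1-P_\delta)$ if $\min(P_\delta,1-P_\delta)\le\epsilon<\max(P_\delta,1-P_\delta)$, and equals $0$ if $\epsilon\ge\max(P_\delta,1-P_\delta)$; in particular it is independent of $b$, so $\min_b$ equals this common value.

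Finally I would assemble the cases. If $\epsilon\ge\max(P_\delta,1-P_\delta)$ then $\bar\delta_{EN}\in[0,1]$ forces $|\bar\delta_{EN}-E\delta|\le\max(P_\delta,1-P_\delta)\le\epsilon$ almost surely, so the left side of \eqref{ensemblebd} is also $0$ and the inequality holds for every $B$. Otherwise the right side is at least $c:=\min(P_\delta,1-P_\delta)>0$, and it suffices to take any $B^{*}$ with $P_\delta(1-P_\delta)/(B^{*}\epsilon^{2})\le c$, for instance $B^{*}=\lceil\max(P_\delta,1-P_\delta)/\epsilon^{2}\rceil$, which then works in every case; for $B\ge B^{*}$ the Chebyshev bound is dominated by the right side, giving \eqref{ensemblebd}. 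The only genuinely delicate point is the first step: one must ensure the conditioning in \eqref{ensemblebd} is rich enough to make the $\widehat\delta_{k_b}$ independent (which is why the points being compared must be held fixed, as they are in the realized dissimilarity \eqref{endis2}), since positive correlation among the $\widehat\delta_{k_b}$ would break the $1/B$ variance decay; everything after that is Chebyshev plus elementary case-checking.
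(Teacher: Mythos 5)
Your proof is correct and follows essentially the same route as the paper: a conditional Chebyshev bound of order $1/B$ for $\bar{\delta}_{EN}$ (using conditional independence across $b$ and the Bernoulli variance bound), compared against a right-hand side of \eqref{ensemblebd} that does not shrink with $B$, with mean unbiasedness used to center both quantities at $E\delta$. Your one refinement is the explicit evaluation of $P(\,|\widehat{\delta}_{k_b}-E\delta|>\epsilon\mid{\cal{D}},k_b)$ and the case $\epsilon\ge\max(P_\delta,1-P_\delta)$: the paper simply asserts that the right-hand side is strictly positive, which fails for such large $\epsilon$, whereas you show both sides vanish there, so your case analysis actually patches that small gap; your caveat that the conditioning must fix the pair of points so the $\widehat{\delta}_{k_b}$'s are conditionally independent is the same (implicit) assumption the paper relies on.
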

 \begin{proof}
 The expected dissimilarity for any clustering function $\phi$ is
 \begin{eqnarray}
 E (\widehat \delta \mid {\cal{D}},k)=P\big(\phi(\mathbb{X},{\cal{D}})=\phi(\mathbb{Y},{\cal{D}}) |{\cal{D}},k\big)=P_{\delta}({\cal{D}},k),
 \label{empdis}
\end{eqnarray}
 i.e., possibly dependent on ${\cal{D}}$ and the number $k$ of clusters in $\phi$.
 Consequently, the conditional probability
\[
P\big(|\widehat \delta-E(\widehat \delta|{\cal{D}})|>\epsilon|{\cal{D}},k\big) > 0,
\]
is well defined for each ${\cal{D}}$ and $k$.  Since $k_b$ is
chosen from a finite set of integers, 
\[
\min_b P\big(|\widehat \delta_{k_b}-E(\widehat \delta_{k_b}|{\cal{D}})|>\epsilon|{\cal{D}},k_b\big) >0,
\]
i.e., the right hand side of \eqref{ensemblebd} is strictly positive.

Next observe that in general, for each $k_b$ there is a $P_\delta({\cal{D}}, k_b)$
as in \eqref{empdis}.  
Since the clusterings in the ensemble are independent over reselected $k_b$'s,  
and the variance of a Bernoulli($p$) is uniformly bounded 
by 1/4 for $p \in [0,1]$, we have that
\begin{eqnarray}
P(|\bar{\delta}_{EN}-E\bar{ \delta}_{EN}|>\epsilon|{\cal{D}},k_1,\ldots,k_B) &\leq& \frac{1}{\epsilon^2}      E( \mid  \bar{ \delta}_{EN}-E\bar{\delta}_{EN} \mid ^2 |{\cal{D}},k_1,\ldots,k_B) \nonumber \\ 
&=&
\frac{1}{\epsilon^2 B^2}     E \bigg( \big( \sum_{b=1}^{B} \widehat \delta_{k_b}-\sum_{b=1}^{B}   E(\widehat \delta_{k_b}|{\cal{D}},) \big)^2 \bigg| {\cal{D}},k_1,\ldots,k_B \bigg) \nonumber\\
&=&\frac{1}{\epsilon^2 B^2}     \sum_{b=1}^{B}  E \bigg( \big(  \widehat \delta_{k_b}-  E( \widehat\delta_{k_b} |{\cal{D}},k_b)  \big)^2 \bigg| {\cal{D}},k_b \bigg) \nonumber\\
&=&
\frac{1}{\epsilon^2 B^2}     \sum_{b=1}^{B}  P_\delta({\cal{D}},k_b)(1-P_\delta({\cal{D}},k_b))\leq \frac{1}{4\epsilon^2 B}.
\label{bernoullibd}
\end{eqnarray}

Hence, for $B$ sufficiently large, \eqref{bernoullibd} implies
\begin{eqnarray}
P(|\bar{\delta}_{EN}-E\bar{\delta}_{EN}|>\epsilon|{\cal{D}},k_1,\ldots,k_B) \leq P(|\widehat \delta_{b}-E\widehat \delta_{b}|>\epsilon|{\cal{D}}, k_b).
\nonumber
\end{eqnarray}
The difference between this bound and the statement of the result is that
this bound uses $E \bar{\delta}_{EN}$ and $E \widehat{\delta}_b$ in place of $E \delta$.
Since $\widehat{\delta}_b$ is mean unbiased, \eqref{endis4} implies that for $B$ large enough
\eqref{bernoullibd} gives
\begin{eqnarray}
P(|\bar{\delta}_{EN}-E\delta|>\epsilon|{\cal{D}},k_1,\ldots,k_B) \leq \frac{1}{4\epsilon^2 B} 
\leq \min_b P\big(|\widehat \delta_{k_b}-E \delta \mid  >\epsilon|{\cal{D}},k_b\big) .
\label{endis5}
\end{eqnarray}
\end{proof}
\noindent
Note that even though mean unbiasedness implies that $E \delta$ is independent of $D$ and $k_b$, $\bar{\delta}_{EN}$ and $\widehat{\delta}_{k_b}$ need not be.  So, the conditioning in 
\eqref{endis5} cannot be dropped.  
If we take expectations over $D$ for finite $n$, the proof of the proposition can be modified to give
$$
E_{\cal{D}} P(|\bar{\delta}_{EN}-E\delta|>\epsilon|{\cal{D}},k_1,\ldots,k_B)
\leq
\min_b E_{\cal{D}} P\big(|\widehat \delta_{k_b}-E \delta \mid  >\epsilon|{\cal{D}},k_b\big) .
$$
It can also be seen that if $n$ increases and the clustering $\phi(\cdot \mid {\cal{D}})$
is consistent in the sense that $\phi(x \mid {\cal{D}}) \rightarrow C_T(x)$ pointwise in $x$, then for an nondecreasing
sequence of $B^* = B^*(n)$, the inequality continues to hold and the conditioning on ${\cal{D}}$ drops out.

Instead of arguing that the mean dissimilarity from ensembling behaves well, one can argue that 
the actual clustering from an ensemble method such as we have presented is close to the population
clustering.  That is, it is possible to identify a condition that ensures ensemble clusterings are more
accurate on average than the individual clusterings they combine.  To see this, fix any
sequence $k_1, \ldots , k_B$ and let $K_\ell \leq k \leq K_u$ and write
\begin{eqnarray}
\Delta &=&\chi(C_T(\mathbb{X}),\phi(\mathbb{X},{\cal{D}})), \nonumber \\
E \Delta({\cal{D}},k) &=&P\big(C_T(\mathbb{X})=\phi(\mathbb{X},{\cal{D}})|{\cal{D}}, k \big), \nonumber\\
\Delta_{EN} &=& \frac{1}{B} \sum_{b=1}^B \chi(C_T(X), \phi_{k_b}(X , {\cal{D}}) ), \nonumber \\
E(  \Delta_{EN} \mid {\cal{D}}, k_1, \ldots , k_B) &=&  \frac{1}{B} \sum_{b=1}^B      P\big(C_T(\mathbb{X})=\phi_{k_b}(\mathbb{X},{\cal{D}})|{\cal{D}},k_b\big).   \nonumber
\end{eqnarray}
Since the $k_b$'s are drawn IID from DUnif$[K_\ell , K_u]$, the condition
\begin{eqnarray}
\frac{1}{K_{u}-K_{\ell}} \sum_{i=K_{\ell}}^{K_{u}} E \Delta({\cal{D}}, i)   > E\Delta({\cal{D}},k)
\label{avgind}
\end{eqnarray}
is equivalent to
$$
E_{k_b} E(\Delta_{EN} \mid {\cal{D}}, k_1, \ldots , k_B) > E(\Delta \mid {\cal{D}}, k).
$$
Moreover,
$$
V\big(E(\Delta_{EN}|{\cal{D}},k_1,\ldots,k_B)\big)\leq V\big(E(\Delta|{\cal{D}},K)\big) .
$$
That is, \eqref{avgind} is equivalent to saying the ensemble clustering is more accurate on average 
and has a smaller variance than any of its constituents.  Condition \eqref{avgind} defines a subset
of $ {\cal {D}} \times [K_\ell, K_u]^B$ on which ensembling
good clusterings will improve the overall clustering.  This is a parallel to the set defined in \cite{Breiman::1996} and quantifies the fact that while ensembling generally gives better results
than not ensembling, it is possible that ensembling in some cases can give worse results.
That is, the best clustering amongst the $B$ clusterings may be better than the ensemble clustering, but
only infrequently.  This is borne out in our numerical analyses below.

Although Hamming distance is the natural metric to use with discrete, nominal data, it is easier to see
that ensembling clusterings gives improved performance using squared error loss.
Our result for clustering is modeled on Sec 4.1  \cite{Breiman::1996} for bagging classifiers.

Let $E_{\cal{D}}$ be the expectation operator for the data generator that produced $D$.  Then,
the population-averaged clustering is 
\begin{eqnarray}
\phi_A(\mathbf{x})=E_{\cal{D}}(\phi(\mathbf{x},{\cal{D}})).
\label{nok}
\end{eqnarray}
Implicitly, this assumes $k$ is fixed and not random.
This is an analog to the ensemble based clustering presented in Sec. \ref{basictechnique}.
Two important squared error `distances' between the true clustering and estimates of it are
\begin{eqnarray}
APE(\phi)&=&E(C_T- \phi(\mathbb{X},{\cal{D}}))^2, \nonumber\\
APE( \phi_A)&=&E(C_T- \phi_A(\mathbb{X}))^2. \nonumber
\end{eqnarray}
A Jensen's inequality gives that the squared error loss from using $\phi_A$ is smaller than
from using $\phi$.

\begin{proposition}
Let $\mathbb{X}$ be an independent outcome from the same distribution as generated ${\cal{D}}$.   Then,
$$
E_{\cal{D}}(APE( \phi)) \geq APE( \phi_A).
$$
\end{proposition}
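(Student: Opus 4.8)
The plan is to follow the bagging argument of Sec.~4.1 in \cite{Breiman::1996}, the only extra ingredient being a Fubini step to separate the two independent sources of randomness, namely the test point $\mathbb{X}$ and the training data ${\cal{D}}$. First I would write out
\[
E_{\cal{D}}(APE(\phi)) = E_{\cal{D}}\, E_{\mathbb{X},C_T}\big(C_T-\phi(\mathbb{X},{\cal{D}})\big)^2 ,
\]
where $E_{\mathbb{X},C_T}$ denotes expectation over the independent test outcome $\mathbb{X}$ and its true label $C_T = C_T(\mathbb{X})$. Because every clustering assigns labels in a finite set and $C_T$ likewise takes finitely many values, the integrand is bounded, so Fubini's theorem applies and the order of integration can be exchanged:
\[
E_{\cal{D}}(APE(\phi)) = E_{\mathbb{X},C_T}\,E_{\cal{D}}\big(C_T-\phi(\mathbb{X},{\cal{D}})\big)^2 .
\]

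Next I would fix a value $x$ of $\mathbb{X}$, hence a value $C_T(x)$, and apply Jensen's inequality to the convex map $z\mapsto (C_T(x)-z)^2$ with the expectation $E_{\cal{D}}$ acting on $z=\phi(x,{\cal{D}})$. Equivalently, one expands the square and uses $E_{\cal{D}}[\phi(x,{\cal{D}})^2]\geq \big(E_{\cal{D}}\phi(x,{\cal{D}})\big)^2$, the gap being exactly $V_{\cal{D}}\big(\phi(x,{\cal{D}})\big)\geq 0$. Since $\phi_A(x)=E_{\cal{D}}(\phi(x,{\cal{D}}))$ by \eqref{nok}, this yields the pointwise inequality
\[
E_{\cal{D}}\big(C_T(x)-\phi(x,{\cal{D}})\big)^2 \;\geq\; \big(C_T(x)-\phi_A(x)\big)^2
\]
valid for every $x$.

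Finally I would integrate this pointwise bound over $\mathbb{X}$ and $C_T$ to get
\[
E_{\cal{D}}(APE(\phi)) = E_{\mathbb{X},C_T}\,E_{\cal{D}}\big(C_T-\phi(\mathbb{X},{\cal{D}})\big)^2 \;\geq\; E_{\mathbb{X},C_T}\big(C_T-\phi_A(\mathbb{X})\big)^2 = APE(\phi_A),
\]
which is the claim. The argument is essentially routine; the one place to be careful is the interchange of $E_{\cal{D}}$ and $E_{\mathbb{X},C_T}$, which is exactly where the independence of $\mathbb{X}$ from ${\cal{D}}$ and the boundedness of the label alphabets are used -- without independence the decomposition in the first display fails. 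It is also worth noting that $\phi_A(x)$ is generally not integer valued, so the result should be read as a statement about the squared-error surrogate rather than about label misclassification, mirroring the analogous caveat in \cite{Breiman::1996}.
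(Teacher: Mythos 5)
Your proposal is correct and follows essentially the same route as the paper: the paper writes $C_T(\mathbb{X})-\phi_A(\mathbb{X})=E_{\cal{D}}\big(C_T(\mathbb{X})-\phi(\mathbb{X},{\cal{D}})\big)$, applies Jensen's inequality for the square, and exchanges $E_{\mathbb{X}}$ and $E_{\cal{D}}$, which is exactly your Jensen-plus-Fubini argument run from the other side of the inequality. Your added remarks on the boundedness needed for Fubini and on $\phi_A$ not being integer valued are sensible refinements but do not change the argument.
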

\begin{proof}

\begin{eqnarray}
APE( \phi_A)&=&E_{\mathbb{X}}(C_T(\mathbb{X})- \phi_A(\mathbb{X}))^2\nonumber\\
		&=&E_{\mathbb{X}}[E_{\cal{D}}(C_T(\mathbb{X})-\phi(\mathbb{X},D))^2] \nonumber \\
		&\leq& E_{\mathbb{X}}E_{\cal{D}}[(C_T(\mathbb{X})- \phi(\mathbb{X},{\cal{D}}))^2]\nonumber\\
		&=& E_{\cal{D}}APE( \phi).\nonumber
\end{eqnarray}
\end{proof}
Obviously, the inequality might be equality, or nearly so, in which case the averaging provides little to no gain.
On the other hand, the averaging may make the distribution of $\phi_A$ concentrate around $C_T$
more tightly than the distribution of $\phi$ does, in which case the inequality would be strict
and the difference between the two sides could be large representing a substantial gain from the
ensembling.  In this sense, the ensembling may stabilize $\phi$ as a way to reduce its variance and hence
improve its performance. Note that if the expectation in \eqref{nok} were taken over $K=k$, $\phi$, or
both $K$ and $\phi$ as well,
the Jensen's inequality argument would continue to hold with $E_{\cal{D}}$ replaced by $E_K E_{\cal{D}}$,  $E_\phi E_{\cal{D}}$,
or $E_K E_{\cal{D}} E_\phi$. That is, more averaging can only improve the clustering.

\subsection{Numerical analysis} 
\label{Num}

To evaluate our proposed methodologies, we did two numerical analyses, one with simulated data
and the other with real data.   Technically, the real and simulated datasets are classification data but we applied our clustering techniques ignoring the class labels.  Thus, the question
is how well the clustering techniques replicated the known true classes.  To assess this we
defined the classification rate (CR) to be the proportion of observations from a data set that were correctly assigned
to their cluster or class.  To calculate the CR, we start by generating the clustering from the
data.  Then, we order the clusters according to how much they overlap with the correct clusters.
For instance, if the estimated clusters are $\hat{C}_1, \hat{C}_2, \hbox{and} \hat{C_3}$ and the correct
clusters are $C_1^*, C_2^*, \hbox{and}, C_3^*$ then, if necessary, we relabel the estimated clusters so that
$\hat{C}_1 = \arg \max \{ \hat{C}_j \cap C^*_1 \mid j = 1, 2, 3\}$, 
$\hat{C}_2 = \arg \max \{ \hat{C}_j \cap C^*_2 \mid j = 1, 2, 3\}$, and
$\hat{C}_3 = \arg \max \{ \hat{C}_j \cap C^*_3 \mid j = 1, 2, 3\}$.   In this way, we ensure that each of the
estimated clusters overlap maximally, in a sequential sence, with exactly one of the true clusters.    This 
can be done using the Hungarian algorithm, see \cite{Huhn::2005} for details.

The simulated data were generated as follows.   First, we fixed the dimension $J=20$.  Then we generated sets of
20-dimensional data points of sizes 50 or 125 assuming different cluster structures.  For instance, in the equi-sized cluster case with five clusters with $n=125$ we generated five clusters of size 25.   For each such cluster,
we proceeded as follows.
Choose  $p_1, \ldots , p_{5} \sim {\sf Unif}(0.2, 0.8)$ IID and choose $a_1 \sim DUnif(3,20)$. Then, draw 25 values from ${\sf Bin}(a_1,  p_1)$
where $a_1$ is the number of values the variable giving the first dimension can assume.  Do this again
for each of ${\sf Bin}(a_1, p_j)$ for $j=2, \ldots, 5$.  Taken together these 125 values give the
first entries for the 125 vectors of length 20 to be formed.  We proceed similarly to obtain the second entries for
the 125 vectors of dimension 20, but draw a new $a_2 \sim DUnif(3, 20)$ independent of $a_1$.
That is, we choose values 
$p_j \sim  {\sf Unif}(0.2, 0.8)$ IID for $j=1, \ldots , 5$ and again generate 25 values
from each ${\sf Bin}(a_2, p_j)$ for each $j$.  Doing this 18 more times for IID ranges of length 
$a_3 = \#(A_3), \ldots, a_{20} = \#(A_{20})$ gives 125 vectors of length 20 that represent 
five clusters each of size 25.  Doing this entire
procedure 3000 times gives 3000 such data sets.  We did this sort of procedure for various clustering patterns
as indicated in Table \ref{clusterpatterns}, e.g., taking nine values in place of 25 values in $D_2$ for the
first cluster.  The patterns were chosen to test the methods systematically over a 
reasonable range of true clusterings based on the size of the clusters.

\begin{table}[h] \center 
\caption{Design of simulated data.} 

\begin{tabular}{lccc} \hline\label{clusterpatterns}
Name&$K$&$(n_1,n_2, \ldots)$&$n$\\ \hline
$D_1$&5&(25,25,25,25, 25)& 125\\
$D_2$&5&(9,29,29,29,29)& 125\\
$D_3$&5&(10,10,35,35,35)&125\\
$D_4$&5&(10,10,10,47,48)&125\\
$D_5$&5&(10,10,10,10,85)& 125\\
$D_6$&5&(10,25,25,25,40)&125\\
$D_7$&5&(10,10,30,30,45)&125\\
$D_8$&5&(10,10,10,35,60)&125\\
$D_9$&5&(10,10,25,40,40)&125\\
$D_{10}$&2&(25,25)& 50\\
$D_{11}$&2&(15,35)& 50\\ \hline
\end{tabular}\\
 { \footnotesize  The top row labeled $D_1$ shows the equi-sized clusters for $n=125$.
Data sets indicated by $D_2$ through $D_5$ are to be understood as contexts for testing how the methods perform
when there are five clusters with two sizes, one small and one large.  Data sets indicated by $D_6$ through $D_9$ 
are to be
understood as contexts for testing how the methods perform when there are five clusters of three sizes, 
small, medium, and large.
Data sets indicated by $D_{10}$ and $D_{11}$ are to be understood as contexts for testing how the methods
perform when there are two clusters, one equisized and the other consisting of a small and a large cluster.}

\end{table}

For each simulation scenario, we applied 12 methods.  The first four ($K$-modes, DBSCAN, ROCK, MBC)
are as described in Subsecs. \ref{Kmodes}, \ref{DBSCAN}, \ref{ROCK}, and \ref{MBC}.  The next two are ensembled versions of
$K$-modes and MBC, as described in Subsec. \ref{ensembles}.   The last six methods are agglomerative.  The first three are
linkage based using Hamming distance but not ensembled.  The last three are the same except they have
been ensembled.  These six methods are also described in \ref{basictechnique}.  We did not include HD from Subsec. \ref{HD} in this comparison because it required the estimate of a reference position $r$ and the
implementation we had did not provide one.

 The results are given in the Table \ref{tabsimres}.     The entries in bold are the maxima in their column.
In some cases, we bolded two entries because they were so close as to be indistinguishable statistically.
The entries with asterisks in each column are the next best  methods, again we asterisked entries that
seemed very close statistically.  The pattern that emerges with great clarity is that HCAL and ENAL perform best.   This is even borne out in the mean column -- although we note that the mean is
merely a summary of the CRs.  It does not have any particular meaning because the cluster structures
were chosen deterministically.  The next best method after HCAL and ENAL is MBC.  Note that $D_8$ provided minimal 
discrimination over the techniques that were not bolded, i.e., not the best, and the mean 
column has the same property.  HCCL and ENCL also performed better than the worst techniques for this 
collection of examples, but performed noticeably worse than the best.

\begin{table}[h]
\footnotesize
\centering
   \caption{Classification rates of twelve methods using simulated data.}
\begin{tabular}{lcccccccccccc} \hline\label{tabsimres}
&\multicolumn{11}{c}{Data}&
\\ \cline{2-12} 
method&$D_1$&$D_2$&$D_3$&$D_4$&$D_5$&$D_6$&$D_7$&$D_8$&$D_9$&$D_{10}$&$D_{11}$&mean\\ \hline
$K$-modes&0.56&0.45&0.46&0.46&0.42&0.46&0.47&0.45&0.47&0.81&0.78&0.53\\
DBSCAN&0.44&0.38&0.43&0.51&0.68&0.41&0.46&0.54&0.44&0.65&0.77&0.52\\
Rock&0.71$^*$&0.471&0.45&0.48&0.51&0.45&0.46&0.51&0.46&\bf 0.95&0.76&0.56\\
MBC&0.71$^*$&0.55$^*$&0.51$^*$&0.45&0.37&0.52$^*$&0.51$^*$&0.44&0.50&0.91&0.86&0.57\\
EN-KM &0.45&0.28&0.33&0.44&0.67&0.35&0.39&0.51&0.37&0.74&0.82&0.49\\
EN-MBC&0.32&0.28&0.35&0.49&0.67&0.35&0.40&0.53&0.39&0.87&0.87&0.50\\
HCSL&0.25&0.26&0.31&0.41&0.71$^*$&0.34&0.38&0.50&0.35&0.57&0.73&0.43\\
HCAL&\bf0.85&\bf0.67&\bf0.71&\bf0.75&\bf0.81&\bf0.67&\bf0.72&\bf0.76&\bf0.71&\bf0.96&\bf0.96&\bf0.78\\
HCCL&0.51&0.45&0.49&0.55\bf&0.59&0.45&0.50&0.56$^*$&0.50$^*$&0.72&0.71&0.55\\
ENSL&0.24&0.27&0.31&0.41&0.71$^*$&0.34&0.38&0.50&0.35&0.57&0.74&0.44\\
ENAL&\bf 0.88& \bf 0.68& \bf 0.70& \bf 0.69& \bf 0.79&\bf0.68&\bf0.71&\bf 
0.75&\bf0.72&\bf0.96&\bf 0.96&\bf0.77\\
ENCL&0.55&0.46&0.50$^*$&0.54$^*$&0.54&0.46&0.50$^*$&0.55$^*$&0.50$^*$&0.72&0.71&0.55\\
\hline
\end{tabular}
\end{table}

Next, we turn to the comparisons of the methods on real data.   We use five different data sets
as summarized in Table \ref{tabreal}.  All  can be downloaded from the UC Irvine Machine Learning
Repository (\cite{Lichman::2013}).  
For these data sets we were
able to implement HD, as it gave a reasonable $r$ for 
three of the data sets ({\sf ZOO}, {\sf SOYBEAN - small},
and {\sf CANCER}), so we compared 14 methods rather than 13.  For the remaining two data sets ({\sf Mushroom} and {\sf LYMPHO}) we manually found an $r$ that gave credible results.\\

\begin{table}[htp] \center
   \caption{Key features of five real data sets.} 
\begin{tabular}{lcccc}  \hline\label{tabreal}
Name&$K$&$J$&$n$&$\max_{j=1}^J a_j$\\ \hline
\sf ZOO&7&16& 101&7\\
\sf SOYBEAN-small&4&35&47&7 \\ 
\sf Mushroom$^*$ &2&21&400 &7\\
 \sf Lymphography domain (LYMPHO) &4&18& 148&8 \\
\sf  Primary tumor domain (CANCER)&21&17& 339&21 \\
\hline
\end{tabular}\\
{ \raggedright \footnotesize  \noindent This table lists the five data sets used in our analysis giving the true number of clusters in each,
the dimension of the data sets, the sample sizes, and the maximum number of possible
distinct discrete values for each, i.e., the $\max_{j=1}^J a_j$'s for each. 
* \footnotesize Since the dataset is large, we only used the last 400 observations in our analysis.}
\end{table}

The results of our analyses are presented in Table \ref{realdataCR}.  As before we bolded the best methods 
with respect to CR and asterisked the second best methods.  Clearly, the best methods were ENCL and ENAL and their performances were similar.  The next best methods were MBC and, possibly, HD.   

The results in Table \ref{realdataCR} also show differences from those in
Table \ref{tabsimres}.   First, the ensembling of the SL, AL, and CL methods tends to improve them
substantially for the real data whereas it has only a slight effect on the
simulated data.  HCAL does well on simulated data, but poorly on the real data. ENCL does well on the real data
but poorly on the simulated data.  Ensembing of $K$-modes and MBC actually 
makes them worse for the simulated data and only slightly better for the real data.  We note that
ENSL, ENAL and ENCL are in terms of Hamming distance and according to Proposition 1, ensembling 
should routinely give more accurate clusterings so it is no surprise that the ensembled versions 
perform better.    Moreover, according to Proposition 2 and the discussion in Sec. \ref{whyensemble}, 
regardless of accuracy, the ensembled version of a clustering method has less variability, as can be seen for 
$K$-modes and MBC when compared with their ensembled versions.  This is seen in Table \ref{realdataCR}; numbers in parentheses are standard deviations (SD) and the absence of a number in parentheses 
next to an entry indicates an SD of zero. 

\begin{table}[h]
\centering
   \caption{Classification rates of the proposed and comparison methods using real data.}
\begin{tabular}{lcccccc}  \hline 
&\multicolumn{5}{c}{Data}&
\\ \cline{2-6}  \label{realdataCR}
method&\sf  ZOO& \sf SOYBEAN&\sf  MUSHROOM &\sf  LYMPHO &\sf  CANCER&mean\\ \hline
$K$-modes&0.72(0.08)&0.80(0.12)&0.68(0.12)&0.45(0.05)&0.37(0.02)&0.60\\
DBSCAN&0.88&\bf 1&0.50&0.54&0.32&0.65\\
Rock&0.87&\bf 1&0.51&0.40&0.15&0.59\\
HD&\bf 0.95&\bf 1&0.66&0.59$^*$&0.28&0.70\\
MBC&0.88(0.06)&\bf 1& \bf 0.97&0.47(0.03)&0.32(0.03)&0.72\\
EN-KM&0.76&0.78$^*$&\bf 0.98&0.55&0.28&0.67\\
EN-MBC&0.74&0.79$^*$&0.97&\bf 0.63&0.33&0.69\\
HCSL&0.88&\bf 1&0.50&0.57&\bf 0.38&0.66\\
HCAL&0.89&\bf 1&0.51&0.58$^*$&0.35$^*$&0.66\\
HCCL&0.91$^*$& \bf 1&0.51&0.58$^*$&0.29&0.66\\
ENSL&0.88&\bf 1&0.73&0.57&0.28&0.69\\
ENAL&0.89&\bf 1&\bf 0.97&0.58$^*$&\bf 0.38&\bf 0.76\\
ENCL&0.91$^*$&\bf 1&\bf 0.97&\bf 0.64&0.35$^*$&\bf 0.77\\
\hline
\end{tabular}\\ 
\end{table}

The inference from our numerical work is that, in some general sense, the best option is to choose ENAL.
Recall, ENAL gave the best performance on the simulated data and nearly the best performance on
the real data -- only ENCL is better and only by 0.01 in terms of the means.  ENAL performs noticeably 
better on the simulated data than on the real data, possibly because
the real data is much more complicated.  We explain the better performance of ENAL 
by recalling \eqref{meanunbiased2} and observing that it is likely to be small for AL for many data sets
because AL tends to give compact clusters that are stable under ensembling.  This follows 
because AL is relatively insensitive to outliers and an average is more stable than an individual outcome.
Thus, if two points $x_1$ and $x_2$ are in the same cluster in a clustering of size $K_1$, 
it is unlikely that reasonable AL reclusterings of size $K_2$ will put $x_1$ and $x_2$ in different clusters even 
when $K_2$ is far from $K_1$.  Hence, AL is likely to put $x_1$ and $x_2$ in the same cluster even for
clusterings of different sizes and this will be preserved under ensembling, cf. the remarks after \eqref{meanunbiased2}.

\section{Extension to High Dimensional Vectors}
\label{highdim}

Extending any clustering technique to high dimensions must evade the Curse of Dimensionality to be
effective.  The way the Curse affects clustering, loosely speaking, is to make the distance between 
any two vectors nearly the same.  This has been observed in work due to \cite{Hall:etal:2005}
but the basic idea can be found in \cite{Murtagh::2004}.  The earliest statement of the Curse of Dimensionality
in a clustering context seems to be \cite{Beyer:etal:1999}.  However, all of these are in the context of
clustering continuous, not categorical, variables.  Clustering categorical variables is fundamentally
different because in categorical clustering there is usually no meaningful way to choose a distance whose
numerical values correspond to physical distances.
For instance, in DNA one could assign values 1, 2, 3, and 4 to $A$, $T$, $C$, and $G$. However,
it is not reasonable to say $A$ and $C$ are twice as far apart as $A$ and $B$.  For this reason,
Hamming distance, which merely indicates whether two outcomes are the same, is preferred.

\subsection{Ensembling over subspace clusterings}

Our first result states a version of the Curse of Dimensionality for the clustering of high dimensional
categorical data.  

\begin{proposition}
\label{dist1}
Let $X= (X_1, \ldots , X_J)^T$ and $X^\prime= (X^\prime_1, \ldots , X^\prime_J)^T$ be independent 
$J$ dimensional categorical random variables.  Let 
$$
d(X, X^\prime) =\frac{1}{J} \sum_{j=1}^J \delta(X_j , X^\prime_j)
$$
Then, as $J \rightarrow \infty$,
$$
E(d(X, X^\prime)) \rightarrow C \quad \hbox{and} \quad \hbox{Var}( d(X, X^\prime)) \rightarrow 0,
$$
for some $C \geq 0$.  In particular, if $X_j$ and $X_j^\prime$ have the same distribution for
all $j$, there is a $p \geq 0$ so that $E(\delta(X_j , X^\prime_j)) = p$ and
$$
E(d(X, X^\prime)) =p \quad \hbox{and} \quad \hbox{Var}( d(X, X^\prime)) = \frac{p(1-p)}{J}.
$$
\end{proposition}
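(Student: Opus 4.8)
The plan is to reduce the statement to elementary facts about sums of independent Bernoulli variables. First I would fix a coordinate $j$ and note that $\delta(X_j, X_j')$ is $\{0,1\}$-valued, hence Bernoulli; write $p_j = E\bigl(\delta(X_j,X_j')\bigr) = P(X_j \neq X_j') \in [0,1]$. Linearity of expectation then gives $E\bigl(d(X,X')\bigr) = \frac{1}{J}\sum_{j=1}^J p_j$, a sequence of arithmetic means confined to $[0,1]$; under the standing assumption that the coordinate laws do not drift with $j$ (in particular in the identically distributed case, where the $p_j$ are all equal) this average converges to some $C$, and $C \geq 0$ because every $p_j \geq 0$. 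Absent any such stabilization assumption one simply extracts a convergent subsequence of the bounded sequence $\frac{1}{J}\sum_{j=1}^J p_j$ to produce $C$.

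For the variance I would invoke independence of the summands: since $X$ and $X'$ are independent and the coordinates within each of $X$ and $X'$ are independent, the pairs $(X_1,X_1'),\dots,(X_J,X_J')$ are mutually independent, so the random variables $\delta(X_j,X_j')$ are independent and the variance of their sum is the sum of their variances. Using the same bound exploited in the proof of Proposition 1 --- that a Bernoulli variance $p_j(1-p_j)$ never exceeds $1/4$ --- I get
$$
\mathrm{Var}\bigl(d(X,X')\bigr) = \frac{1}{J^2}\sum_{j=1}^J p_j(1-p_j) \;\leq\; \frac{1}{4J} \;\longrightarrow\; 0 .
$$
This is the step I expect to carry the weight of the argument: the conclusion really does rest on coordinatewise independence, since with dependent coordinates the $O(J^2)$ covariance terms can each be of order $1/4$ and $d(X,X')$ need not concentrate at all; so the proof hinges on reading ``independent $J$-dimensional categorical random variables'' as including independence of the components, which is exactly the regime relevant to the Curse of Dimensionality phenomenon being described.

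For the closing ``in particular'' clause I would specialize to the case where all the $X_j$ and $X_j'$ share one common law. Then $p_j = P(X_j \neq X_j')$ equals a single value $p = 1 - \sum_{\ell} P(X_1=\ell)^2 \geq 0$ for every $j$ (the sum running over the finitely many categories), with equality to $0$ only when the common law is degenerate. Substituting $p_j \equiv p$ into the two displays above yields $E\bigl(d(X,X')\bigr) = \frac{1}{J}\cdot Jp = p$ and $\mathrm{Var}\bigl(d(X,X')\bigr) = \frac{1}{J^2}\cdot Jp(1-p) = \frac{p(1-p)}{J}$; here no limiting or subsequence step is needed, since the mean already equals $p$ for every finite $J$.
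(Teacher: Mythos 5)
Your proof is correct and is essentially the argument the paper intends: the paper gives no formal proof of this proposition, remarking only that it is ``little more than the law of large numbers,'' and your coordinatewise Bernoulli mean/variance computation with the bound $p_j(1-p_j)\leq 1/4$ is precisely that argument made explicit. Your added caveats --- that the variance conclusion genuinely requires independence of the pairs $(X_j,X_j')$ across $j$ (i.e.\ coordinatewise independence, which the paper assumes implicitly), and that convergence of $\frac{1}{J}\sum_j p_j$ to a single $C$ needs the coordinate laws to stabilize rather than merely be bounded --- are fair refinements of hypotheses the proposition leaves tacit.
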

The proof is little more than the law of large numbers and heuristically means that in the limit of high 
dimensions any two independent vectors
of categorical variables are equidistant.   The rate of decrease in variance controls how 
quickly this occurs.  Hence, clustering based on distances
necessarily degenerates or, more precisely, the output of a clustering procedure will be random
in the sense that no one clustering can reasonably be favored over another.
Prop. \ref{dist1} is a discrete analog of  \cite{Kriegel:etal:2009}, 
\cite{Steinbach:etal:2004},  and \cite{Parsons:etal:2004}.

The two main ways analysts evade the Curse in high dimensional clustering with continuous data 
are subspace clustering, i.e., find a clustering on a subset of the variables and `lift' it to a clustering of
the entire vector, and feature selection, in which clustering is done on relatively few functions of the variables
thought to characterize the clusters in the data.

Evidence has accumulated that feature selection does not work very well on continuous data -- absent
extensive knowledge about which features are relevant -- see \cite{Yeung:Ruzzo:2001}, \cite{Chang::1983},
\cite{Witten:Tibshirani:2010}.  Indeed, it can be verified that if generic methods for obtaining features,
e.g., PCA, are used with categorical data, the computing demands become infeasible.
Since techniques based on feature selection are even harder to devise
and compute for discrete data, feature selection does not seem a promising approach to high 
dimensional clustering of categorical data.
However, various forms of subspace clustering have shown some promise (\cite{Friedman:Meulman:2004}, \cite{Jing:etal:2007}, \cite{Witten:Tibshirani:2010}, and  \cite{Bai:etal:2011}).  The common feature of
these subspace methods is that they seek to identify a single subspace from which to generate
a good clustering.

We refine the idea of subspace clustering by adding a layer of ensembling over randomly chosen
subspaces.  That is,
we randomly generate subspaces, derive clusterings for each of them, and then combine them
analogous to the procedure that generated \eqref{Incmatrix}.  Our procedure is as follows.

Recall that the vector of categorical variables is
 $ x \in A_1 \times \ldots \times A_J$ and write $J=hR$ for some integers $h$ and $R$.  Then, each
 $x$ can be partitioned into $R$ subvectors of length $h$.   So, each data point of the form
 $x$ gives $R$ submatrices of dimension  $n \times h$ and the whole data set $x_1, \ldots, x_n$ can be represented by
 matrices of the form
$$
L_r (n)= \begin{pmatrix} x_{1, (r-1)h +1} &  \ldots & x_{1, rh} \\
\vdots & \vdots & \vdots \\
 x_{n, (r-1)h +1} & \ldots & x_{n, rh}
 \end{pmatrix} .
 $$
Now, each $L_r$ can be clustered by any of the techniques in Sec. 3.1.

Recall, the conclusion of Sec. 3.1 was that ENAL gave the best results overall.
Let the result of, say, ENAL on $L_r(n)$ be denoted ${\cal{C}}(K_r) = \{C_{r1}, \ldots , C_{r K_r} \}$,  
for $r=1, \ldots, R$.
This gives $R$ clusterings of the full data set denoted by 
$$
\{{\cal{C}}(K_1), \ldots ,{\cal{C}}(K_R) \}.
$$
These clusterings can be combined by using the dissimilarity in \eqref{endis2}
and then any hierarchical method can be used to obtain final clusters.
In our computing below, we chose AL because it did slightly better than CL in Sec. \ref{Num}.
In our work below, we refer to this method as subspace ensembling.

This basic template can obviously be extended to the case where the equi-sized sets $L_r$ 
of variables
are chosen randomly.  This is an improvement because it means that variables do
not have to be adjacent to contribute to the same clustering.  One obvious way to proceed is to
choose random subsets of size $h$ without replacement until $R$ subsets are obtained. 
In addition, $h$ itself can vary across subsets in which case the
requirement that $J = hR$ is replaced by the requirement that $J=h_1 + \dots + h_\nu$ for
some randomly chosen $\nu$. We denote this form of subspace ensembling by $WOR$. 
 Note that even if
two variables are useful in the same clustering they will rarely be chosen to contribute to the same clustering.
A second way to proceed is where $h$ varies across subsets but the subsets of variables are chosen with replacement and then 
duplicate variables are removed from each $L_r(n)$; we denote
this method of subspace ensembling by $WR$. This permits variables that contribute usefully to the same clustering to have a fair chance of being used in the same clustering.  On the other hand, $WR$ does not require each variable to contribute to
the overall clustering.  For computational efficiency, the $WR$ method we used had an extra layer of subsampling
and discarding of repeated variables. So, if we denote by $L_r(n)'$ the result of removing duplicates from $L_r(n)$ as described, we again chose with replacement and removed duplicates from each $L_r(n)'$. This ensured that the clusters in the ensemble were based on few enough variables that results could be found with reasonable running times.
One way to see that this double resampling approach is likely to reduce the dimension of each subspace is demonstrated by the following.

\begin{proposition}
Suppose a bootstrap sample of size $J$ is taken from a set of $J$ distinct objects denoted 
$\{1, \ldots , J\}$.  

1) Let $N$ be the number of distinct elements from $\{1, \ldots , J\}$ in
such a sample.  Then for $1 \leq k \leq J$, 
$$
P( N = k) =
\frac{
\binom{J}{k}\big( \sum_{a_1, \ldots , a_k \mid \sum_{m=1}^k a_m =J, a_m >  0} 
\binom{J}{a_1~~\ldots~~a_k}\big)}{J^J}.
$$

2) Now suppose a second level of bootstrap sampling is done on the $N=k$ distinct elements.  Let
$N^*$ be the number of distinct elements in the second level bootstrap sample.
Then, for $1 \leq k^* \leq k$,
$$
P(N^* = k^* \mid N = k) =
\frac{
\binom{k}{k^*}\big( \sum_{a_1, \ldots , a_{k^*}  \mid \sum_{m=1}^{k^*} a_m =k, a_m >  0} 
\binom{k}{a_1~~\ldots~~a_{k^*}}\big)}{k^k}.
$$

\end{proposition}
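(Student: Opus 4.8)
The plan is to recognize both parts as instances of the classical occupancy (balls-in-boxes) computation, and to obtain 2) from 1) by a conditioning argument, since, given $N=k$, the second-stage resample is an ordinary bootstrap sample drawn from a fixed $k$-element set.

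For 1), I would model the bootstrap sample as a sequence $(Y_1,\ldots,Y_J)$ of independent draws, each uniform on $\{1,\ldots,J\}$, so that the sample space consists of $J^J$ equally likely outcomes; this is the denominator. Next I would decompose the event $\{N=k\}$ according to which $k$-subset $S\subseteq\{1,\ldots,J\}$ is the set of values actually realized. These sub-events are disjoint, there are $\binom{J}{k}$ of them, and relabelling $\{1,\ldots,J\}$ is a measure-preserving bijection of the sample space, so each has the same probability; this accounts for the factor $\binom{J}{k}$. For a fixed $S$ with $\#S=k$, the contributing outcomes are exactly the sequences in $S^J$ that are surjective onto $S$, and I would count these by further conditioning on the multiplicity vector $(a_1,\ldots,a_k)$ recording how many of the $J$ coordinates equal each element of $S$. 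Such a vector ranges over the ordered compositions of $J$ into $k$ strictly positive parts, and the number of sequences with a prescribed multiplicity vector is the multinomial coefficient $\binom{J}{a_1~~\ldots~~a_k}$. Summing over all admissible $(a_1,\ldots,a_k)$ produces the inner sum in the statement, and dividing by $J^J$ gives the claimed expression. As a sanity check, that inner sum equals the number of surjections $[J]\to[k]$, i.e.\ $k!\,S(J,k)$ with $S$ the Stirling number of the second kind, recovering the familiar occupancy distribution.

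For 2), conditioning on $\{N=k\}$ fixes the collection of distinct first-stage elements as a particular set of $k$ objects, and the second-stage bootstrap sample is, by definition, a size-$k$ sample drawn with replacement uniformly from that set. The distribution of the number $N^*$ of distinct elements in such a sample depends only on the cardinality $k$ of the set being resampled, not on the identities of those elements nor on the first-stage multiplicities, so I would simply invoke 1) with $(J,k)$ replaced by $(k,k^*)$ to read off the stated conditional formula. The argument is essentially routine; the only points needing care are the combinatorial bookkeeping in 1) — checking that summing multinomial coefficients over ordered positive compositions counts each surjection exactly once — and being explicit in 2) that the second-stage sample size is the realized value $k=N$, so the conditioning on $\{N=k\}$ must be carried out first in order to reduce to a deterministic-sample-size instance of 1).
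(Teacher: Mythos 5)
Your proposal is correct and follows essentially the same route as the paper: count the outcomes in $\{N=k\}$ by choosing the realized $k$-subset (the factor $\binom{J}{k}$) and summing multinomial coefficients over positive compositions of $J$, divide by $J^J$, and obtain 2) by noting that conditional on $N=k$ the second-stage resample is just the same situation with $J$ replaced by $k$ and $k$ by $k^*$. Your added remarks (the symmetry justification, the surjection/Stirling-number sanity check, and the explicit note that the second-stage sample size is the realized $k$) only make the same argument more careful.
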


\begin{proof} For 1) the cardinality of the event $\{ N = k\}$ is
\begin{eqnarray}
&& \#( \mbox{ {ways to get $k$ distinct elements from $J$ draws from $J$ distinct objects}}) \nonumber \\
&=&
\#( \mbox{ { subsets of size $k$ from a set of $J$ distinct objects}}) \nonumber \\
&& \times ~
\#( \mbox{ {ways to allocate the $J$ draws to the $k$ values }})
\nonumber \\
&=& \binom{J}{k} \times
\sum_{a_1, \ldots , a_k \mid \sum_{m=1}^k a_m =J, a_m >  0} 
\binom{J}{a_1~~\ldots~~a_k} .
\label{onelevel}
\end{eqnarray}
Now, the expression for $P(N=k)$ in 1) follows by dividing \eqref{onelevel} by $J^J$, the number of points
in the sample space. For 2), merely note that, conditional on $N=k$, the probability that $N^* = k^*$ is the same
as \eqref{onelevel} but with $J$ replaced by $k$ and $k$ replaced by $k^*$.
\end{proof}

Clearly, $N^* \leq N$ and so $E(N^*) \leq E(N)$. Also, expressions for $E(N)$ and $E(N^* \mid N=k)$ follow from Prop. 4 but are not edifying.  However, given these,
$$
E(N^*) = \sum_{k=1}^J \sum_{k^*=1}^k k^* P(N^*=k^* \mid  N=k) P(N=k)
$$ 
is relatively easy to find. The value of $E(N)/J$ for $J > 1000$ is about 0.63 in agreement with the
fact that the probability of a given $x_i$ not being chosen is $(1- 1/n)^n \rightarrow 1/e$
as $n \rightarrow \infty$ and $.63 \approx 1 - (1/e)$.   For the double bootstrapping, $E(N^*)/J \approx 0.47$
can be verified computationally.


\subsection{Numerical evaluation of subspace ensembling }
 
 We compare four methods using simulated data and two real data sets.  The four methods are the 
 proposed techniques $WR$ and $WOR$ with $M=200$ ensembles, the mixed weighted $K$-modes 
 ($MWKM$) method due to \cite{Bai:etal:2011}, and the sparse hierarchical method ($SHM$) due to 
 \cite{Witten:Tibshirani:2010}.  \cite{Bai:etal:2011} only tested their method on dimensions up to 68; 
 we use their method as implemented in code supplied by the authors for 4027, 44K, and 50k dimensions.
$SHM$ was intended for continuous data; we implemented a version of code supplied by the authors 
that we modified for categorical data to use Hamming distance. Unfortunately, in our computations 
$SHM$ performed poorly so the results are not presented here. \\

\subsubsection{Simulations}
\label{sims}

Our simulation study involves data that mimics genetic sequence data.
Consider a sample of size  $n=\sum_{i=1}^K n_i$ where $K=5$ and $n_i$ is the number of observations 
from the $i$-th cluster, regarded as if it were a population in its own right as well as a component in the 
larger population from which the sample of size $n$ was drawn.    We assume there are $J=50,000$ categorical
variables each drawn from the set ${\cal{S}}=\{A,T,C,G\}$.   Then, to simulate data, it remains to assign distributions 
to the 50,000 variables.

First, we consider data sets denoted $\mathcal{D}_1$ and formed as follows.
Suppose  the first $q_{1}$ dimensions for $n_1$ samples are drawn IID from the set $\Sigma=\{A,T,C,G\}$  
where $P_C=P_G=\frac{1}{3}$, $P_A=P_T=\frac{1}{6} $ and the first $q_1$ dimensions for the 
other subsamples $n_i, i=2,\ldots,5$ are drawn IID from $\Sigma$ uniformly.  Similarly, for the next
$q_2$ dimensions, for the $n_2$ samples draw values using the same distribution as for the first $q_1$ variables
for $n_1$ and use the uniform on $\Sigma$ for $n_1$, $n_3, n_4, n_5$.
Repeat this procedure to fill in the $q_3$, $q_4$ and $q_5$ dimensions for the $n$ data points 
so the data have $K=5$. 
To add noise to data, we take the last $q_6$ variables to be IID uniform from $\Sigma$ as well.
For the purposes of evaluating the various methods, we choose the number of variables randomly by
setting $q_1,\ldots, q_6\sim {\sf Mnom}(50000, p=(0.15, 0.15, 0.15, 0.15, 0.15, 0.25))$.
To generate $\mathcal{D}_2$ we increased the number of noise variables by taking
$q_1,\ldots, q_6\sim {\sf Mnom}(50000, p=(0.1, 0.1, 0.1, 0.1, 0.1, 0.5))$.

Table \ref{simhighdim} shows the mean CRs for $WOR$, $WR$, and $MWKM$ for 500 data sets
of the form $\mathcal{D}_1$ and $\mathcal{D}_2$ for each setting of the numbers of samples in the
five clusters.   Obviously, $WR$ substantially outperforms the
other methods.  We attribute this to the fact that $WR$ permits variables that are related
to each other to recur in the ensembling so that the basic method from Sec. \ref{lowdim}
can find the underlying structure in the subspaces.  Note also that the performance of $WR$ is
insensitive to the cluster size.

\begin{table}[htp]
\centering
   \caption{Classification rates of three methods using simulated data.}
\begin{tabular}{lccccc} \hline
&\multicolumn{2}{c}{$n=(10,10,10,10,10)$}&&\multicolumn{2}{c}{$n=(5,10,10,10,15)$}
\\ \cline{2-3} \cline{5-6} 
method&$\mathcal{D}_1$&$\mathcal{D}_2$&&$\mathcal{D}_1$&$\mathcal{D}_2$\\ \hline
WOR&0.501&0.473&&0.564&0.488\\
WR&0.998&0.989&&0.997&0.987\\
MWKM&0.615&0.583&&0.615&0.587\\
\hline
&\multicolumn{2}{c}{$n=(5,5,13,13,14)$}&&\multicolumn{2}{c}{$n=(5,5,10,15,15)$}
\\ \cline{2-3} \cline{5-6} 
WOR&0.504&0.589&&0.531&0.589\\
WR&0.974&0.998&&0.978&0.996\\
MWKM&0.606&0.574&&0.6128&0.578\\
\hline
&\multicolumn{2}{c}{$n=(5,5,5,17,18)$}&&\multicolumn{2}{c}{$n=(5,5,5,10,25)$}
\\ \cline{2-3} \cline{5-6} 
WOR&0.566&0.678&&0.622&0.692\\
WR&0.977&0.996&&0.968&0.995\\
MWKM&0.597&0.577&&0.555&0.541\\
\hline
&\multicolumn{2}{c}{$n=(5,5,10,10,20)$}&&\multicolumn{2}{c}{$n=(5,5,5,5,30)$}
\\ \cline{2-3} \cline{5-6} 
WOR&0.552&0.631&&0.688&0.752\\
WR&0.976&0.998&&0.962&0.995\\
MWKM&0.617&0.570&&0.514&0.487\\
\hline
\end{tabular}\\ 
\label{simhighdim}
\end{table}

\subsubsection{Asian rice (Oryza sativa)}
\label{sativa}

We consider a population consisting of five varieties of rice
(Oryza sativa) and use clustering on single nucleotide polymorphism (SNP) data to assess the plausibility of the division
of the species into five varieties.   The data, {\sf RICE}, were originally presented and analyzed in 
\cite{Zhao:etal:2011} and consist of 391 samples from the five varieties indica (87), aus (57), temperate 
japonica (96), aromatic (14) and tropical japonica (97) where the numbers in parentheses indicate
the number of samples from each variety.    


The analysis done in \cite{Zhao:etal:2011} was to measure the genetic similarity between individuals.
Essentially,  \cite{Zhao:etal:2011} calculate the proportion of times a pair of nucleotides at the same position
differ.   Mathematically, this is equivalent to using a version of the average Hamming distance. Note that
in their analysis they ignored missing values as is permitted in PLINK, \cite{Purcell:etal:2007}.
Setting $K=5$ we regenerated their analysis and dendrogram.  The result is shown on the left 
hand panel of Fig. \ref{fig1:level}.  Around the outer ring of the
circle the correct memberships of the data points are indicated. The CR for this clustering is one.\\

For comparison, the right hand panel in Fig. \ref{fig1:level} shows the dendrogram
for clustering the {\sf RICE} data using $WR$, again setting $K=5$.   
The CR was found to be one.   No dendrogram for MWKM can be shown because
it is not a hierarchical method.  However, the CR for MWKM was 0.87, making it second best
in performance.  Even though PLINK and $WR$ have the same CR, visually it is obvious that
$WR$ gives the better dendrogram because the clusters are more clearly separated.
That is, $WR$ does not perform better in terms of correctness but does provide a better visualization
of the data.   This is the effect of ensembling over dissimilarity matrices.

    \begin{figure} \centering
      \includegraphics[width=.6\textwidth]{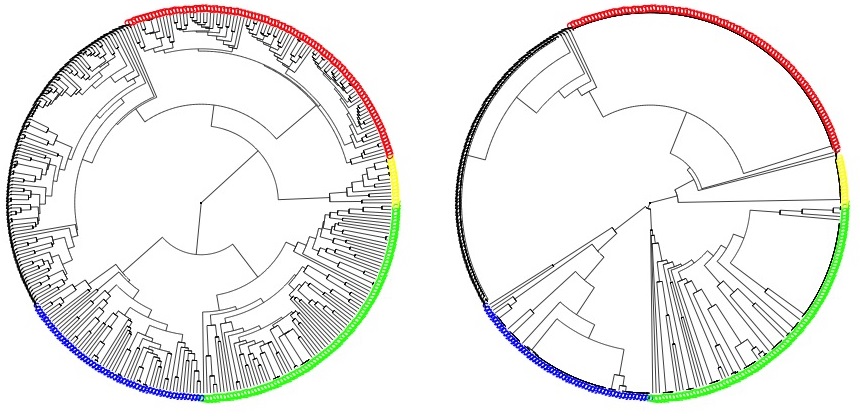} \caption{Circular dendrogram generated by
      the same technique as in \cite{Zhao:etal:2011}(left)
      and $WR$ (right), for the {\sf RICE} data. 
      Green, blue, red, yellow, and black indicate indica, aus,  temperate japonica,  aromatic  
         and tropical japonica, respectively. }
      \label{fig1:level}
      \end{figure}
      
   

\subsubsection{Gene expression data}

In this example we demonstrate that the performance of $WR$ can be regarded as robust. Consider the gene expression data presented and analyzed in \cite{Alizadeh:etal:2000}.  It is actually
classification data and analyzed as such in \cite{Dudoit:etal:2002}.  Here, for demonstration purposes we 
compare some of the clusterings we can generate to the known classes so as to find CRs.  
The sample size is 62 and there are three classes:  diffuse large 
B-cell lymphomas (D) with 42 samples, follicular lymphoma (F) with 9 samples, and chronic lymphocytic leukemia (C) with 11 samples.
The dimension of the gene expression data after pre-processing is 4026.  (The pre-processing included
normalization, imputation, log transformation, and standardization to zero mean and unit variance across genes.)

First we applied $K$-means to the data 1000 times with $K=3$ and random starts, and found an average
CR of 0.79  with a standard error of 0.13.   When we applied $MWKM$ to the data, we found its CR to be 0.67,
noticeably worse than $K$-means.

Now consider the following procedure.  Trichotomize all 4026 variables by using their 33\% and 66\% percentiles
and relabel them as `1', `2', and `3'.  Then, apply $WOR$ and $WR$ to the discretized data.
To get CRs for hierarchical methods, we must convert their dendrograms to clusterings by
cutting them at some level.  To do this, we used the function {\sf cuttree} (with $K=3$) in
{\sf R}.
The resulting CRs 
for $WOR$, $WR$ were 0.71 and 0.84, respectively.  That is, even when the data are discrete only
because they were made that way artificially, $WR$ handily outperformed three other methods.

\begin{figure}[h]\centering 
\includegraphics[scale=.35]{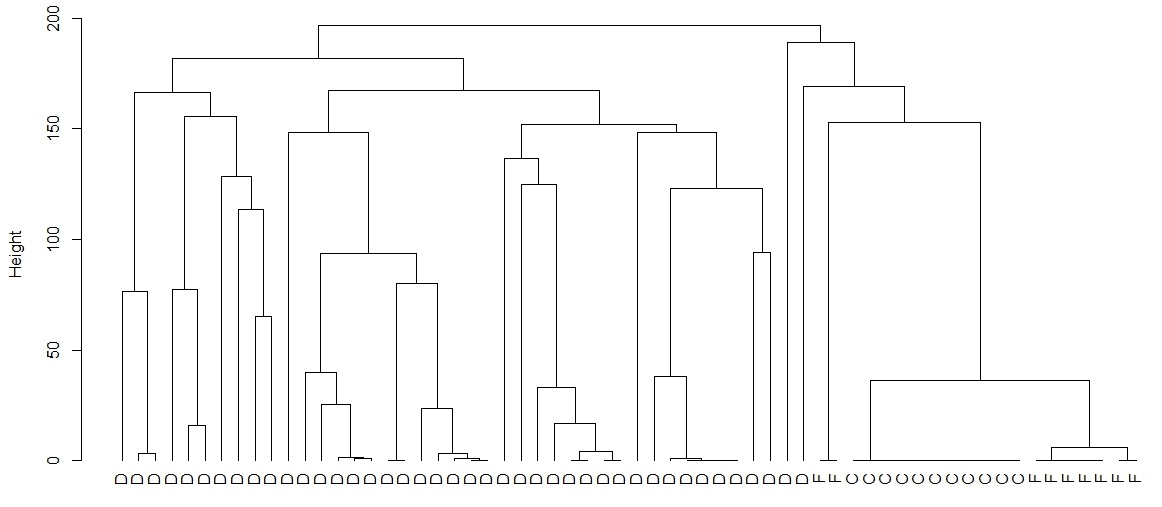}\caption{Dendrogram of WR.}
\label{DendGeneExp}
\end{figure}

The full dendrogram from $WR$ is given in Fig.  \ref{DendGeneExp}.
Along the horizontal axis, the correct labels of the classes are given.  If these were ignored and one were
merely to eyeball the data, one could be led to put the two rightmost D's and the first two F's into the
cluster of C's, giving an CR of 58/62 = .93 -- much higher than .84.
One could just as well put the leftmost D's into
one cluster, the next 13 D's into a second cluster, the next eight D's into a third cluster, 
and the rightmost 18 observations into a fourth cluster, leaving the intervening data points essentially as a fifth cluster
that does not cohere.  In this case, the CR would be terrible.  So, even though the data are
artificially discretized, using an automated method of
$WR$ on a discretized $WR$ and {\sf cutree} gives a result in the midrange of what
informal methods would give.  This is evidence that ensemble methods such as $WR$
are inherently robust.   Otherwise put, reading dendrograms informally can be misleading whereas
formal methods may be reliably accurate.

\section{Extension to high dimensional vectors of unequal length}
\label{highdimGen}

We extend our method to clustering categorical vectors
of different lengths.  This is an important  clustering problem in genomics
because it is desirable to be able to cluster strains of organisms, 
for instance, even though their genomes have different lengths 
in terms of number of nucleotides.  The first step 
is to preprocess the data so all the vectors 
have the same length.  This process is called alignment.  
The aligned vectors
can then be clustered using the technique of Sec. \ref{highdim}.
The point of this section is to verify that our clustering method
is effective even after alignment.

To be specific, consider sequence data of the form
${\cal{X}}_i = (x_{i1}x_{i2}...x_{iJ_i})$ with $i=1, \ldots , n$ in which
each $x_{ij}$ for $j=1, \ldots , J_i$ is a nucleotide in 
$\{A, T, C, G\}$.  It is obvious that a sufficient
condition for our method to apply is that all the $x_{ij}$'s 
assume values in sets $A_j$ for which $\#(A_j)$ is bounded.  
To find a common value for
the $n$ sequences we align them using software called MAFFT-7 (\cite{Katoh:Standley:2013}).  MAFFT-7 is a
multiple alignment program for amino acid or nucleotide sequences.
The basic
procedure was first presented in \cite{Needleman:Wunsch:1970}.  A recent
comparison of  algorithms and software for this kind of alignment problem
was carried out in \cite{Katoh:Standley:2013} who argued that MAFFT-7
is faster and scales up better than other implementations such as
CLUSTAL and MUSCLE.

At the risk of excessive oversimplification, the 
basic idea behind alignment procedures is as follows.  
Suppose two sequences $x_i$ and $x_k$
of different lengths are to be aligned.  Then, the alignment procedure
introduces place holders represented by $\phi$ so that the two sequences
are of the same length and the subsequences that do match are in the same
place along the overall sequence.  When more than two sequences must
be aligned, a progressive alignment can be used, i.e., two sequence are aligned and fixed, the third one aligned to previous ones, and the procedure continues until all sequences are aligned.
Given that a collection of genomic sequences have been aligned, we can
cluster them by applying our technique. 

In the absence of established theory for this more complicated case, we present two examples to verify that the procedure gives reasonable results.
Both of our examples concern viruses:  Their genomes are large enough
to constitute a nontrivial test of our clustering method and of different enough in lengths from species to species that alignment of
some sort is necessary. 

As a first simple example, consider the virus family {\it Filovirdae} .  This family includes numerous related viruses that form filamentous infectious viral 
particles (virions).  Their genomes are represented as single-stranded negative-sense RNAs.  The two members of the family that are best known are {\it Ebolavirus} and {\it Marburgvirus}. 
Both viruses, and some of their lesser known relatives, cause severe disease in humans and 
nonhuman primates 
in the form of viral hemorrhagic fevers (see \cite{Pickett:etal:2012}). \\
There are three genera 
in {\it Filoviridae}, and we chose as our data set all the complete and distinct viral
genomes with a known host from this family available from ViPR.  There were 103 in total
from 3 genera, namely, {\it Cuevavirus} (1, Cue), {\it Ebolavirus} (80), and {\it Marburgvirus} (22, Mar) where
the indicators in parentheses show the frequency and the abbreviation.
{\it Ebolavirus} further subdivides into five species: {\it Bundibugyo virus} (3, Bun), 
{\it Reston ebolavirus} (5, Res), {\it Sudan ebolavirus} (6, Sud), {\it Tai Forest ebolavirus} (2, Tai), 
and {\it Zaire ebolavirus} (64, Zai).
The hosts are human, monkey, swine, guinea pig, mouse, and bat (denoted
hum, mon, swi, gpi, mou, bat, respectively, on the dendrograms). The minimum and maximum genome lengths are 18623 and 19114.  While we recognize that the genomes
in the pathogen virus resource are not drawn independently 
from a population, we 
can nevertheless apply our method and evaluate the results.

We apply only $WR$ as $WOR$ and $MWKM$ performed poorly on
high dimensional data. In addition, we used $HCSL$, 
$HCCL$, and $HCAL$ because we had a dissimilarity that could be used after alignment.
Essentially, as long as the distance between $\phi$ and the nucleotides 
could be
omitted from the Hamming distance sum, the dissimilarity was well-defined.
It turned out that all three gave nearly identical results although $HCAL$ was
slightly better. The important point is that $WR$ performed better
than the three non-ensemble methods because the ensembling over dissimilarity matrices gives a better assessment
of the distance between aligned genomes than not ensembling. 

The results for $HCAL$ are shown in the  Fig. \ref{Flohcal}.  
It is seen that the 
 subpopulations are well separated.
 Virologically speaking, this means that the various species correspond to relatively
 tight clusters.  Within the {\it Marburg} cluster (on the left) it appears that most of the genomes
 have either humans or bats as hosts suggesting that one organism (probably the bats)
 is transmitting {\it Marburg} to the other (probably human).  Sudan ebolavirus mainly afflicts humans
 (pending more data)  while {\it Reston ebolavirus} is known not to be a pathogen for humans.
 The vast majority of the { \it Zaire} and {\it Bunibugyo} genomes have human as host. 
 
 Fig. \ref{Flowr} shows the corresponding dendrogram using $WR$,
 an ensemble method.
Qualitatively the results are the same as for $HCAL$.  The improvement of $WR$ over $HCAL$ is
seen in the fact that $WR$ reveals greater separation between the clusters.  Indeed, even if one
corrects for the vertical scale, the leaves within a cluster under $WR$ separate from each other
at a much finer level.  That is, the ensembling over the dissimilarities accentuates the 
differences between genomes in different clusters as discussed in Sec. \ref{whyensemble}.

\begin{figure}[h!]
 \centering
  \includegraphics[scale=.3]{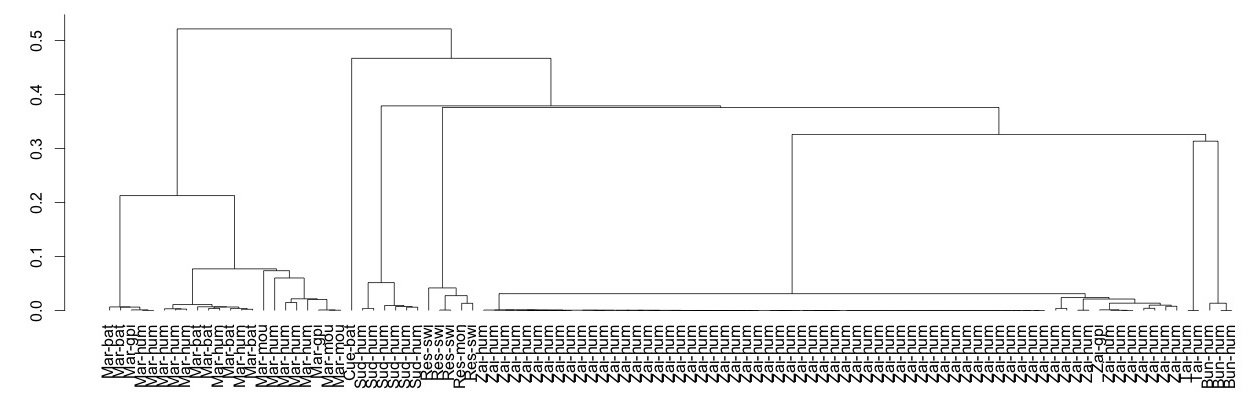}
   \caption{Dendrogram for the {\it Filoviridae} data set using $HCAL$.  The leaves are labeled with the Genus or species and host.  }
   \label{Flohcal}
  \includegraphics[scale=.3]{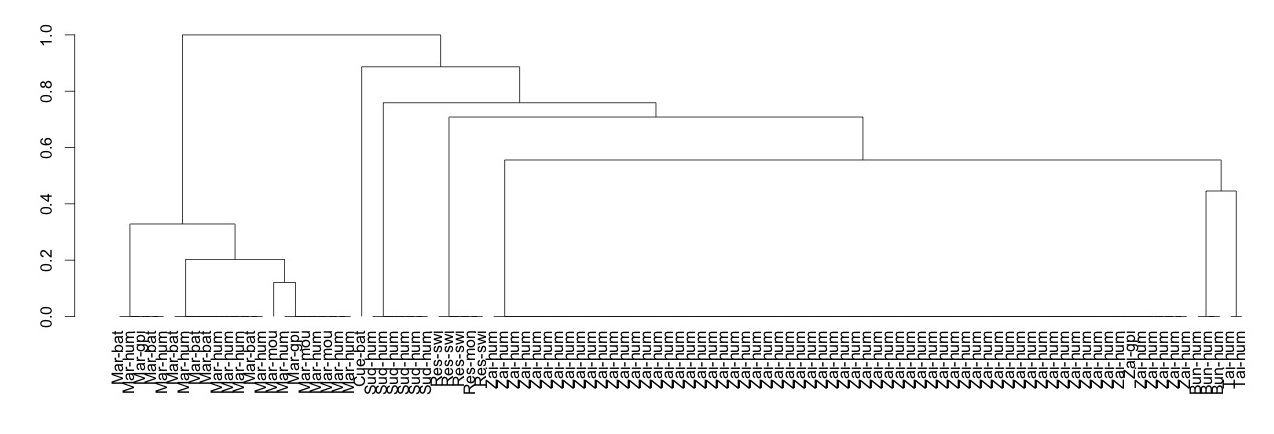}
  \caption{Dendrogram for the {\it Filoviridae} data set using $WR$ under the same labeling
   convention as the Fig. \ref{Flohcal}. }
   \label{Flowr}
   \end{figure}
   
   These dendrograms can be contrasted with a phylogenetic tree for the {\it Filoviridae} viruses.
   Figure \ref{phylotree_Filo} shows the phylogenetic tree generated by the neighbor-joining (NJ)
   method as implemented in the R package {\sf Ape} (\cite{Paradis:etal:2004}).  The NJ method constructs a tree by
   successive pairing of the neighbors. The idea behind a phylogenetic tree, as opposed to a 
   dendrogram, is to represent the sequence of evolutionary steps through which organisms mutated as a reasonable way to classify the existing and extinct organisms.
   The goals of the two sorts of trees are somewhat different and one would not expect them to agree fully, since clustering only gives a
   mathematically optimal path to the evolutionary endpoint while phylogenetic trees try to track genomic changes.
   For instance, the phylogenetic tree shows that {\it Zaire ebolavirus} with a human host separates early into 
   two distinct groups which may or may not be reasonable evolutionarily and is different
   from Fig. \ref{Flowr}.  {\it Tai Forest ebolavirus} and
   {\it Bunidbugyo virus} genomes are seen to be possibly close evolutionary but are 
   not close in Fig. \ref{Flowr}.  On the other hand, {\it Sudan ebolavirus} and {\it Reston ebolavirus} are seen to be
   close in terms of both clustering and phylogenetics while {\it Marburg} is 
   a separate and recent genus, consistent with it being its own cluster.

 \begin{figure}[h]
   \centering
   \includegraphics[scale=.4]{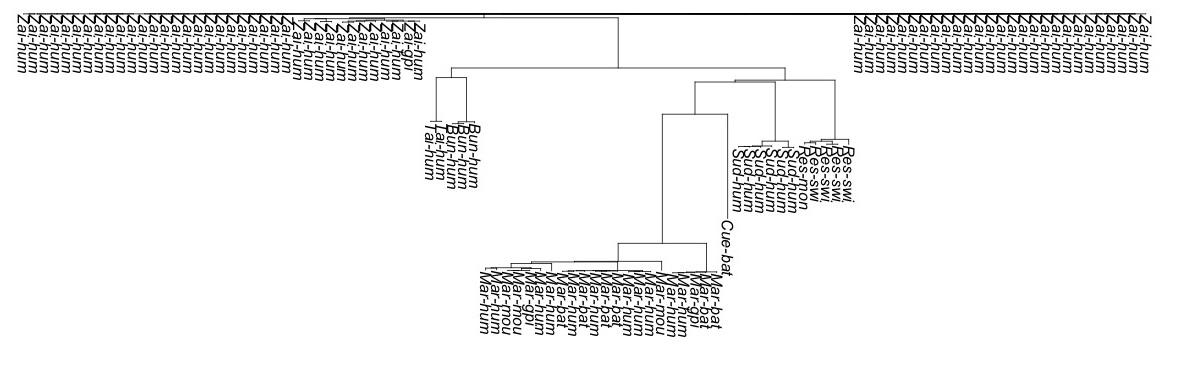}
      \caption{Phylogenetic tree generated for { \it Filoveridae} by NJ. It is broadly similar to the dendrograms
      in Figs. \ref{Flohcal} and \ref{Flowr}, but differs in some details. }
 \label{phylotree_Filo}
   \end{figure}

As a second and more complicated example, we studied the {\it Herpesviridae} family of viruses
that cause diseases in humans and animals.  {\it Herpesviridae} is a much larger family than 
{\it Filoviridae} and the genomes in {\it Herpesviridae} are generally longer as well as more varied in
length than those in {\it Filoviridae}.  
According to ViPR, the family {\it Herpesviridae} is divided into three
subfamilies ({\it Alphaherpesvirinae}, {\it Betaherpesvirinae} and {\it Gammaherpesvirinae}).  
We limited our analysis to the distinct and complete genomes in
{\it Alphaherpesvirinae} that have known hosts; {\it Alphaherpesvirinae} has more complete genomes than
either {\it Betaherpesviridae} or {\it Gammaherpesvidae}.   Within {\it Alphaherpesviridae} there are
has five genera: {\it Iltovirus} (IIt), {\it Mardivirus} (Mar), {\it Scutavirus}, {\it Simplexvirus} (Sim), and {\it Varicellovirus} (Var).
Since {\it Scutavirus} did not have complete any complete genomes, we disregarded this
genus.  The rest remaining genera had 20, 18, 20, 40 genomes, respectively, from different hosts, namely, human, Monkey, chicken, Turkey, Duck, cow, Bat (Fruit), equidae (horse),  Boar, Cat family, Amazona oratrix (denoted
hum, mon, chi, tur, duc, cow, bat, equ, boa, cat, and ora,  respectively in the dendrograms). These viral genomes have lengths ranging from 124784 to 178311 base pairs.

Parallel to the {\it Ebolavirus} example, we present the two dendrograms corresponding to
$HCAL$ and $WR$.  These are in Fig.\ref{Herphcal} and \ref{Herpen}.   The top panel shows that $HCAL$, the
non-ensembled version based on Hamming distance, is qualitatively the same as the
lower panel.  As before, the key difference is that $WR$ yields a cleaner separation of clusters
relative to $HCAL$.   It is important to note that the clusters in
the dendrograms correspond to (genus, host)  pairs.   That is, the clustering corresponds to
identifiable physical differences so the clusters have a clear interpretation.

 \begin{figure}[h!]
 \centering
  \includegraphics[scale=.3]{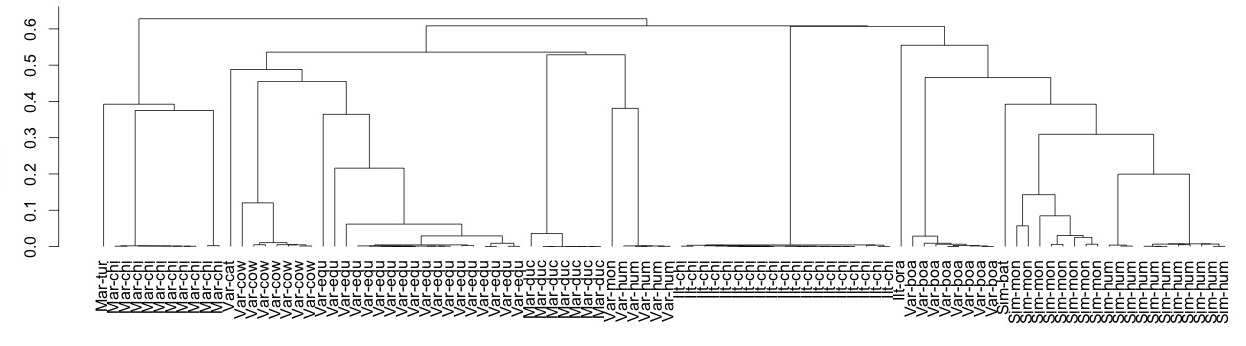}
  \caption{Dendrogram for the {\it Alphaherpesviridae} data set using $HCAL$.  The leaves are labeled with the genome Genus or subpopulation and host.}
   \label{Herphcal}
 \includegraphics[scale=.3]{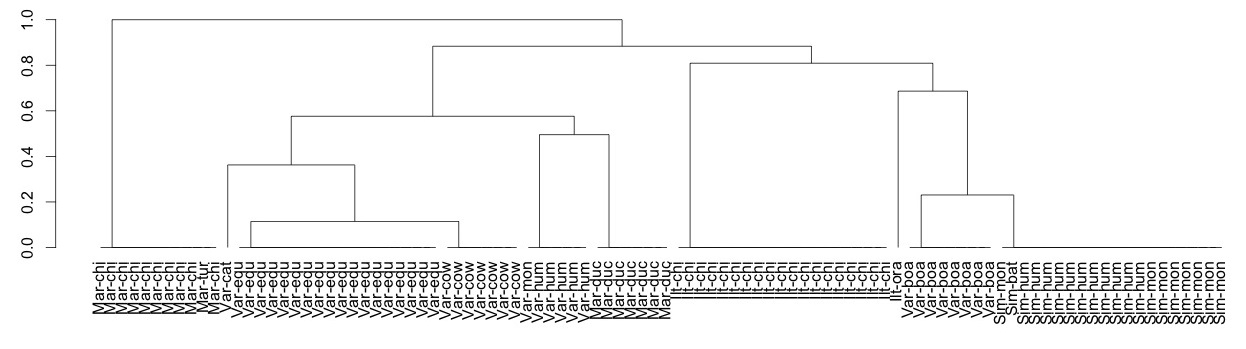}
 \caption{Dendrogram for the {\it Alphaherpesviridae} data set using $WR$ under the same labeling
    convention as the top panel.}
 \label{Herpen}
   \centering
  \includegraphics[scale=.3]{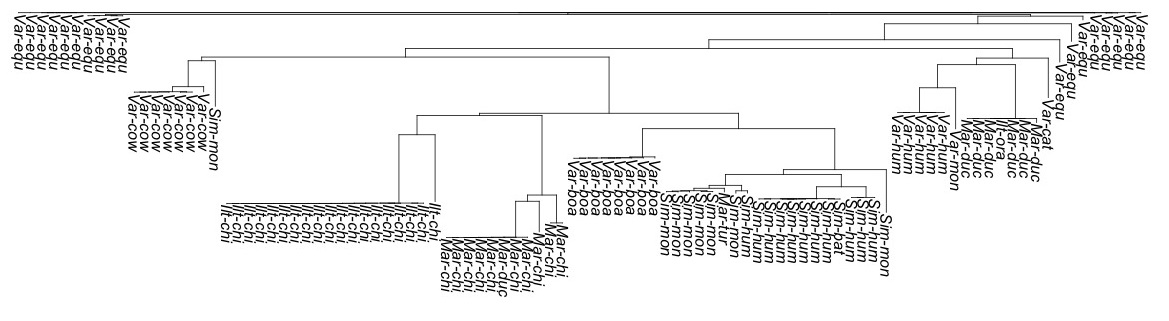}
  \caption{Phylogenetic tree generated for {\it Alphaherpesviridae}. It is broadly similar to the dendrograms
        in Figs.  \ref{Herphcal} and \ref{Herpen}, but differs in some details. }
  \label{Phylo_Herp1}
 \end{figure}

 The same cannot be said for the phylogenetic tree generated by NJ as before and shown in Fig.
 \ref{Phylo_Herp1}.  For instance, {\it Varicellovirus} from host equidae are partitioned into two clusters early.
 Also, {\it Mardivirus} from chicken and duck hosts are not cleanly separated.  On the other hand, most other
 population-host pairs are fairly well separated.  Overall, Fig. \ref{Phylo_Herp1}
  does not give as good a clustering as the panels in Figs. \ref{Herphcal}  and \ref{Herpen}.


\section{Conclusions}
\label{conclude}

In this paper we have presented a method for clustering categorical data in low, high, and varying dimensions.
We began with relatively small dimensions, up to 35 for the 
${\sf SOYBEAN}$ data, and studied the way our method seemed to
improve over other methods.  Specifically, we ensembled over dissimilarity
matrices in an effort to represent the distance between data points more
accurately.  Our theoretical work in Sec. \ref{whyensemble} provides
some formal justification for why this sort of technique should perform
well in some generality.

Then we turned to the clustering of high dimensional categorical data,
focusing on genomic data.  We extended our ensemble method for
low dimensional data to high dimensional categorical vectors of equal length
by adding a layer of ensembling:  We obtained dissimilarity matrices 
by ensembling over randomly selected dimensions.  We then used our method on
categorical vectors of different lengths by artificially making them the
same length through alignment procedures.  Again, our ensembling
method performed better than the other methods we tested.
In particular, we compared the output of our method in this case
to phylogenetic trees.  While not strictly scientific, the dendrograms
we generated can be interpreted physically and differ in some
important respects from phylogenetic trees generated from the same data.

Throughout we have used a large number of simulated and real data examples
to buttress the intuition behind the technique and formal results.
We comment that there are many other tests of the general methodology
that could be done.  For instance, in our clustering of viral genomes
we could have included incomplete genomes.  However, it many cases the
incomplete genomes had over 90\% of the nucleotides missing and we
thought this insufficient for good conclusions.  

\section*{Acknowledgments}
\vspace{-.2cm}
The authors gratefully acknowledge research support from NSF-DTRA grant DMS-1120404. Authors would like to thank Daniela Witten and Liang Bai for providing  the codes of sparce clustering and MWKM, and Mehdi R.M. Bidokhti for his comments of Virology part. 

\bibliographystyle{chicago}

\end{document}